\theoremstyle{plain}
\newtheorem{theorem}{Theorem}[section]
\newtheorem{proposition}[theorem]{Proposition}
\newtheorem{lemma}[theorem]{Lemma}
\newtheorem{corollary}[theorem]{Corollary}
\theoremstyle{definition}
\newtheorem{assumption}[theorem]{Assumption}
\theoremstyle{remark}
\newtheorem{remark}[theorem]{Remark}
\title{Dataset Reset Policy Optimization for RLHF}
\author{%
    Jonathan D. Chang$^{*}$ \\
     Department of Computer Science\\
    Cornell University\\
    \texttt{jdc396@cornell.edu}\\
    \And
    Wenhao Zhan\thanks{Equal contribution} \\
    Department of Electrical and Computer Engineering\\
    Princeton University\\
    \texttt{wenhao.zhan@princeton.edu}\\
    \And
    Owen Oertell \\
    Department of Computer Science\\
    Cornell University\\
    \texttt{ojo2@cornell.edu}\\
    \And
    Kianté Brantley \\
    Department of Computer Science\\
    Cornell University\\
    \texttt{kdb82@cornell.edu}\\
    \And
    Dipendra Misra\\
    Microsoft Research New York\\
    \texttt{dimisra@microsoft.com}\\
    \And
    Jason D. Lee \\
    Department of Electrical and Computer Engineering\\
    Princeton University\\
    \texttt{jasonlee@princeton.edu}\\  
    \And
    Wen Sun \\
    Department of Computer Science\\
    Cornell University\\
    \texttt{ws455@cornell.edu}\\
}
\newcommand{\pir}{\pi^{\mathrm{SFT}}}
\newcommand{\hpi}{\widehat{\pi}}
\newcommand{\CQ}{C_{\mathrm{Q}}}
\newcommand{\eeval}{\epsilon_{\mathrm{eval}}}
\newcommand{\pis}{\pi^{\star}}
\newcommand{\CKL}{C_{\mathrm{KL}}}
\newcommand{\Ceval}{C_{\mathrm{eval}}}
\newcommand{\bpi}{\overline{\pi}}
\newcommand{\rlhfalg}{DR-PO with NPG update}
\newcommand{\DR}{\mathcal{D}_{\mathrm{R}}}
\newcommand{\DTR}{\mathcal{D}_{\mathrm{TR}}}
\newcommand{\DTRt}{\mathcal{D}_{\mathrm{TR},t}}
\newcommand{\hr}{\widehat{r}}
\newcommand{\rs}{r^{\star}}
\newcommand{\epmd}{\epsilon_{\mathrm{PMD}}}
\newcommand{\emle}{\epsilon_{\mathrm{MLE}}}
\newcommand{\CTR}{C_{\mathsf{TR}}}
\newcommand{\CST}{C_{\mathsf{ST}}}
\newcommand{\hQ}{\widehat{Q}}
\newcommand{\Ac}{\mathcal{A}}
\newcommand{\Dc}{\mathcal{D}}
\newcommand{\Ec}{\mathcal{E}}
\newcommand{\Fc}{\mathcal{F}}
\newcommand{\Hc}{\mathcal{H}}
\newcommand{\Mc}{\mathcal{M}}
\newcommand{\Rc}{\mathcal{R}}
\newcommand{\Sc}{\mathcal{S}}
\newcommand{\Xc}{\mathcal{X}}
\newcommand{\Eb}{\mathbb{E}}
\newcommand{\Pb}{\mathbb{P}}
\newcommand{\eps}{\epsilon}
\newcommand{\KL}{\mathsf{KL}}
\newcommand{\argmax}{\arg\max}
\newcommand{\argmin}{\arg\min}
\newcommand{\unif}{\mathrm{Unif}}
\newcommand{\TO}{\widetilde{\mathcal{O}}}
\newcommand{\rmax}{r_{\mathrm{max}}}
\newcommand{\ind}{\mathbbm{1}}
\newcommand{\Clip}{\mathrm{Clip}}
\newcommand{\BKL}{B_{\mathsf{KL}}}
\newcommand{\CSFT}{C_{\mathsf{SFT}}}
\definecolor{yxc}{RGB}{255,0,0}
\definecolor{yjc}{RGB}{190,0,255}
\definecolor{whz}{RGB}{0,155,0}
\definecolor{mygreen}{RGB}{0,155,155}
\setlist[itemize]{leftmargin=*}
\setlist[enumerate]{leftmargin=*}
\begin{document}

\maketitle

\begin{abstract}
    Reinforcement Learning (RL) from Human Preference-based feedback is a popular paradigm for fine-tuning generative models, which has produced impressive models such as GPT-4 and Claude3 Opus. This framework often consists of two steps: learning a reward model from an offline preference dataset followed by running \emph{online} RL to optimize the learned reward model.  In this work, leveraging the idea of \emph{reset},  we propose a new RLHF algorithm  with provable guarantees. Motivated by the fact that offline preference dataset provides informative states (i.e., data that is preferred by the labelers), our new algorithm, \emph{Dataset Reset Policy Optimization} (DR-PO), integrates the existing offline preference dataset into the online policy training procedure via \emph{dataset reset}: it directly resets the policy  optimizer to the states in the offline dataset, instead of always starting from the initial state distribution. In theory, we show that DR-PO learns to perform at least as good as any policy that is covered by the offline dataset under general function approximation with finite sample complexity. In experiments, we demonstrate that on both the TL;DR summarization and the Anthropic Helpful Harmful (HH) dataset, the generation from DR-PO is better than that from Proximal Policy Optimization (PPO) and Direction Preference Optimization (DPO), under the metric of GPT4 win-rate. Code for this work can be found at \url{https://github.com/Cornell-RL/drpo}.\end{abstract}
    
\section{Introduction}
Reinforcement learning aims at maximizing a cumulative reward function. However, specifying a reward function in practice can be challenging \citep{wirth2017survey}. Reinforcement Learning with Human Feedback (RLHF) has become an effective approach when a reward function does not exist \citep{christiano2017deep}. Operating under a setting where human labelers provide preference-based feedback (e.g., ranking of generations from an RL agent), RLHF learns a reward model and then optimizes the reward model via RL techniques. RLHF has found applications across various domains, including games \citep{macglashan2017interactive, christiano2017deep, warnell2018deep}, large language models (LLMs) \citep{ziegler2019fine, stiennon2020learning, wu2021recursively, nakano2021webgpt, ouyang2022training, glaese2022improving, bai2022training, ramamurthy2022reinforcement, liu2023languages}, and robot learning \citep{brown2019extrapolating, shin2023benchmarks}.

RLHF typically consists of the following two steps: (1) fitting a reward model using a pre-collected offline preference-based dataset (often generated from some pre-trained models and labeled by humans), (2) and learn a policy via online RL (e.g., Proximal Policy Optimization \citep{schulman2017proximal}) to optimize the learned reward model. These two steps are often done separately in the sense that once the reward model is learned, step (2) only optimizes the reward model without ever using the offline preference dataset. Is there any benefit of re-using the offline data during the procedure of optimizing the reward model via online RL? Prior work on hybrid RL \citep{song2022hybrid, ball2023efficient} demonstrated that combining offline data and online data can often significantly boost learning efficiency. Can we achieve a similar boost in learning efficiency for RLHF? 

Towards answering this, we propose an algorithm called Dataset Reset Policy Optimization (DR-PO), operating under the assumption of \emph{being able to reset}, i.e., we can go back to any state and start policy optimization and data collection from that point (as opposed to reseting to initial states). While being able to reset is certainly an assumption, it is naturally satisfied when using RL to fine-tune generative models like language  models and diffusion models \citep{lee2023aligning}. This is because the underlying Markov transitions are simple, known, and deterministic.  For instance, when using RL to optimize text generation, resetting to a state is equivalent to feeding a partial sentence (together with the initial prompt) to the transformer-based policy.   Our algorithm, DR-PO, is a hybrid RL approach that integrates offline data into an online RL procedure: when collecting online data, DR-PO resets the policy optimizer to the states in the offline dataset for exploration. Algorithmically, DR-PO is simple: it iteratively collects a batch of online data by resetting the policy to states in the offline data, performs policy rollouts, and optimizes the policy using the online batch via policy optimization techniques such as Natural Policy Gradient (NPG) \citep{kakade2001natural} or Actor-critic methods (e.g., PPO \citep{schulman2017proximal}). 

While DR-PO is as simple to implement as most of the existing policy optimization algorithms, we demonstrate that DR-PO achieves strong theoretical guarantees under natural assumptions. Specifically, when optimizing a  reward model learned from an offline preference dataset, DR-PO is capable of learning a policy that is at least as good as \emph{any policy} which is covered by the offline data in terms of maximizing the ground truth rewards, and DR-PO achieves this result under general function approximation with finite sample complexity. DR-PO is also computationally tractable since it only requires supervised learning style oracles such as a Maximum Likelihood Estimation (MLE) oracle (for fitting reward models) and a Least Squares Regression oracle (for learning value functions). Thus DR-PO advances the status of the theoretical work on RLHF (see more detailed discussion in \cref{sec:related}). Empirically, we test our approach on two standard RLHF datasets: TL;DR summarization \citep{stiennon2020learning} and Anthropic HH. In TL;DR summarization, we demonstrate that the summaries generated by DR-PO outperform those from PPO and DPO \citep{rafailov2023direct} in terms of GPT4 win-rate. We also show that when transferring the policies trained on TL;DR to the CNN/DailyMail news articles in a zero-shot manner, policies trained via DR-PO again generate summaries that outperform those from PPO and DPO, indicating that dataset reset does not make DR-PO overfit. Finally, we test how DR-PO scales on Anthropic HH \citep{bai2022constitutional} across three different model scales and show that DR-PO scales just as well as PPO while still outperforming baselines.  

Our key contributions can be summarized as follows. 
\begin{itemize}
    \item We propose to use the idea of dataset reset to integrate offline data into online RLHF. Reset is a property that comes for free when optimizing generative models using RL. By leveraging dataset reset, our new algorithm DR-PO achieves strong performance guarantees and offers significant benefits in terms of computation tractability over prior theoretical RLHF works.  
    \item When instantiating PPO as a policy optimizer in DR-PO, we show that our approach can outperform strong baselines PPO and DPO over two standard RLHF benchmarks: TL;DR summarization and Anthropic HH. DR-PO achieves superior empirical performance over PPO without introducing any additional computation or memory overhead to PPO.  
\end{itemize}

\section{Related Work}
\label{sec:related} 
\paragraph{Provably efficient RLHF.} The theoretical investigation on online RLHF started in bandit setting with the notion of dueling bandits \citep{yue2012k, zoghi2014relative, dudik2015contextual}, which aims at identifying the optimal arm with human preference feedback over action pairs. Extending this discussion to tabular MDPs, \citet{novoseller2020dueling} proposes a dueling posterior sampling algorithm that requires computing and sampling from the posterior of the model dynamics and reward function, leading to potential computational inefficiency. Another PAC RLHF algorithm for tabular MDPs is presented by \citet{xu2020preference}. However, this method involves computing complicated bonus terms to guide exploration. Additionally, \citet{pacchiano2021dueling, chen2022human} have designed online RLHF algorithms with provable guarantees by updating a confidence set of the policies iteratively, which, unfortunately, are not practically feasible either. In a more recent study, \citet{zhan2023provableon} tackles the problem of reward-free RLHF. Nevertheless, their algorithm introduces a series of non-convex optimization problems which are challenging to solve. Notably, these works either only focus on tabular MDPs \cite{novoseller2020dueling, xu2020preference, pacchiano2021dueling} or rely on specialized function approximation such as linear parametrization \citep{pacchiano2021dueling, zhan2023provableon} and function classes with small Eluder dimension \citep{chen2022human, wu2023making}, which further restricts their application in practice. In contrast, we focus on the setting where preference-labeled data is only available offline, which is more consistent with the settings considered in applications of fine-tuning language models. Also by using the idea of dataset reset, our algorithm works with function approximation that is much more general than the above prior works. 

The study on theoretical offline RLHF is more limited. \citet{li2023reinforcement} focuses on learning the reward from a human's behavior in dynamic discrete choice models rather than from human preference feedback, and thus, the setting is different. \citet{zhu2023principled} studies PAC algorithms for linear models and \citet{zhan2023provableoff} extends the analysis to general function approximation. However, both of their algorithms are not computationally efficient because they rely on constructing a confidence set for the reward function and solving a constrained maximin problem.

\citet{tiapkin2023demonstrationregularized} studied the setting where high-quality expert demonstrations exist. They use behavior cloning to train a policy using expert demonstrations and then run an Upper-confidence-bound style algorithm to optimize a reward function under a KL regularization to the behavior-cloned policy.  They show that for tabular and linear MDP, the expert demonstrations reduce the sample complexity of online RL. We consider preference-based offline datasets, which may not necessarily come from a high-quality expert, and function approximation that is significantly more general than linear and tabular functions. Note that UCB based algorithms can quickly become computationally intractable beyond tabular and linear settings (e.g., \citet{jiang2016contextual, du2021bilinear}). Our algorithm uses the idea of dataset reset for exploration and does not involve any optimism-based exploration strategy, making it computationally tractable even when dealing with general function approximation. We think that the key idea of dataset reset can also be used in the setting from \citet{tiapkin2023demonstrationregularized} to make their algorithm extend beyond the tabular and linear MDP settings.

\paragraph{Empirical RLHF algorithms.} This work continues the recent literature of RLHF algorithms that perform online RL \citep{zhu2023fine, wu2023pairwise, chang2023learning} to finetune large generative models. There have also been efforts to build on top of DPO \citep{rafailov2023direct} with algorithms such as IPO \citep{azar2011reinforcement} and KTO \citep{kto}. In this paper, our work is complementary to many of these efforts in augmenting RL through the incorporation of dataset resets in online generation. Ideas from this work could directly be applied to existing online RLHF algorithms such as P3O \citep{wu2023pairwise} and APA \citep{zhu2023fine}. Given the recent work \citep{yuan2024self} in incorporating online generations to improve DPO, an offline RLHF method, the idea of dataset resets could also be relevant in this space of hybrid RLHF methods. 

\paragraph{Using reset in RL}   The idea of reset is not new in RL  \citep{kakade2003sample, bagnell2004learning, nair2018overcoming, salimans2018learning, yin2022efficient, uchendu2023jump, silver2016mastering, agarwal2019reinforcement, daume2005learning, daume2009search}. When resetting is available, it helps address exploration and credit assignment problems. In this work, we show that resetting to an offline dataset helps in RLHF. The key challenge in RLHF is that the reward model is learned purely from offline data which may not have a global coverage to the entire state space. Our algorithm incorporates KL regularization to ensure the learned policies do not deviate too much from the offline data so that we do not over-optimize the learned reward model (e.g., reward hacking). While the idea of KL-regularization was also used in prior empirical RLHF works (e.g.,\citet{stiennon2020learning, bai2022training}), we show that by combining the two key ideas, KL regularization and dataset reset, our algorithm achieves strong performance in both theory and practice. We also demonstrate the efficacy of our approach in the application of fine-tuning language models.

\section{Preliminaries}
\paragraph{Markov Decision Processes.} In this paper we consider an episodic time-inhomogeneous Markov Decision Process (MDP) $\Mc$ with state space $\Sc=\{\Sc_h\}_{h=1}^H$, action space $\Ac$ and horizon $H$. Here $\Sc_h$ is the subspace of all states at step $h$. We suppose the states incorporate the information of the current step and thus $\{\Sc_h\}_{h=1}^H$ are mutually disjoint. We assume that every episode begins at the same state $s_1$ and ends at the dummy state $s_{H+1}$, but our analysis can be extended to a random starting state easily. In each episode, at step $h\in[H]$, the agent observes the current $s_h$ and executes an action $a_h$. Then the environment generates a reward $\rs(s_h,a_h)$ (which can be \textit{unobservable} to the agent), and transits to a new state $s_{h+1}$, which is sampled from the transition probability $P(\cdot|s_h,a_h)$. Here we suppose the reward function $\rs:\Sc\times\Ac\mapsto[0,1]$ is bounded, and for any possible trajectory $\tau=(s_h,a_h)_{h=1}^H$, we have $\sum_{h=1}^H\rs(s_h,a_h)\leq\rmax$. Note that when the reward is sparse, $\rmax$ can be much smaller than $H$. 

A policy $\pi:\Sc\to\Delta_{\Ac}$ specifies the action selection probability of the agent conditioned on the current state. Given a policy $\pi$, we define its state-action visitation measure as $d^{\pi}_h(s,a)=\Pb^{\pi}(s_h=s,a_h=a)$  for all $s\in\Sc_h,a\in\Ac,h \in [H]$ where $\Pb^{\pi}(\cdot)$ denotes the distribution of the trajectory when executing policy $\pi$. We will also use $d^{\pi}_h(s)=\sum_{a\in\Ac}d^{\pi}_h(s,a)$ to denote the state visitation measure and $d^{\pi}(\tau)$ to denote the distribution of the trajectory under policy $\pi$. We can further define the associated value functions and Q functions of policy $\pi$ and reward function $r$ as $V^{\pi,r}(s)=\Eb_{\pi}[\sum_{t=h}^H r(s_t,a_t) \mid s_h=s], Q^{\pi,r}(s,a)=\Eb_{\pi}[\sum_{t=h}^H r(s_t,a_t) \mid s_h=s,a_h=a]$ for all $h\in[H],s\in\Sc_h,a\in\Ac$.\footnote{For notation simplicity, we drop the usual subscript $h$ in value functions, as we have assumed state $s$ contains the information of time step $h$. } They characterize the expected cumulative reward under policy $\pi$ starting from a state or a state-action pair. 

We aim to find an $\eps$-optimal policy $\hpi$ with respect to the true reward $\rs$ and a target policy $\pis$ which we denote as some high-quality policy ($\pis$ is not necessarily the globally optimal policy), i.e., $V^{\pis,\rs}(s_1)-V^{\hpi,\rs}(s_1)\leq \eps$. Particularly, we would only utilize common oracles such as Maximum Likelihood Estimator (MLE) and Least Squares Regression (LSR). We also want our algorithms to be able to leverage general function classes beyond linear functions.

\paragraph{RL from Human Feedback (RLHF).} We consider the setting where the true reward $\rs$ is unobservable. Instead, we have access to an offline trajectory-pair dataset $\DR=\{(\tau^{0}_m,\tau^{1}_m,o_m)_{m=1}^M\}$ labeled with human preference, where the trajectories $\tau^{0}_m$ and $\tau^{1}_m$ are i.i.d. sampled from some pre-trained policy $\pir$ (e.g., in NLP tasks, this can be the instruction fine-tuned policy, which is also called supervised fine-tuned (SFT) policy). In this work, we do not explicitly consider the learning procedure of $\pir$, and we assume it is given to us. 
Here $o_m\in\{0,1\}$ characterizes the human preference over the trajectory pairs $(\tau^{0}_m,\tau^{1}_m)$ and we suppose the human preference is modeled by a monotonically increasing link function $\Phi$:
\begin{align*}
\Pb(o=1\mid\tau^{0},\tau^{1})=\Pb(\tau^{1}\succ\tau^0)=\Phi(\rs(\tau^1)-\rs(\tau^0)),
\end{align*}
where we use $\rs(\tau)$ to denote $\sum_{h=1}^H\rs(s_h,a_h)$ for any trajectory $\tau=(s_h,a_h)_{h=1}^H$. A widely-used model is the Bradley-Terry-Luce (BTL) model \citep{bradley1952rank} where the link function is chosen to be the sigmoid function $\sigma(x) = 1/\{1 + \exp(-x)\}$. We will use $\kappa=\frac{1}{\inf_{x\in[-\rmax,\rmax]}\Phi'(x)}$ to measure the non-linearity of the link function $\Phi$, which in turn reflects the hardness of learning the reward model from the human preference.  Given $\DR$, we can learn a reward model $\hr$ using MLE:  \looseness=-1
\begin{align}
	\hr=\argmin_{r\in \mathcal{R}} \sum_{m=1}^M -\log \Pb(o=o_m\mid\tau^{0}_m,\tau^{1}_m; r), \label{eq:rewardmle}
\end{align}
With the BTL model, the above NLL becomes 
\begin{align*}
\ind(o^m = 1)\cdot\log \left(1 + \exp(r(\tau^0_m)- r(\tau^1_m) )\right)+\ind(o^m = 0)\cdot\log \left(1 + \exp(r(\tau^1_m) - r(\tau^0_m) )\right),
\end{align*}
which is a  loss function that has been used in many prior RLHF works\citep{christiano2017deep, stiennon2020learning}. We also assume that we have an \emph{unlabeled dataset} $\DTR=\{\tau_n\}_{n=1}^N$ where $\tau_n$ is i.i.d. sampled from $\pir$. Note that $\DTR$ is unlabeled, so it potentially can be much larger than the human-labeled dataset $\DR$.

\paragraph{The Ability to Reset.}  We consider the setting where we  can \textbf{reset} the system. More formally, given any state $s_h$ at time step $h$, we can reset the RL agent directly to $s_h$ and rollout a policy $\pi$. While this is certainly an assumption, it is satisfied in many important applications, e.g., fine-tuning generative models such as LLMs \citep{ouyang2022training, ramamurthy2022reinforcement, chang2023learning} and Diffusion models \citep{lee2023aligning} with RL.  
In text generation, a state $s_h$ typically means a partial sentence. Resetting from this state would then mean that we feed the partial sentence $s_h$ to a transformer based policy and have it generate new tokens one by one starting from the given partial sentence.  We emphasize that in the RL literature, prior works (e.g., PPO and many RL theoretical works \citep{agarwal2021theory, azar2017minimax, jin2020provably, zhan2022offline}) typically do not assume the ability to reset -- they often assume the agent has to always start from some initial states.  However, when reset is available, it is often a game changer, in both theory \citep{yin2022efficient} and in practice (e.g., AlphaGo \citep{silver2016mastering}).

\section{Dataset Reset Policy Optimization}
We present a meta-algorithm here to provide the details of how we leverage the idea of dataset reset to collect online batch data. We abstract away the policy optimization oracle here to emphasize the novelty of our interaction with the environment for online data collection via dataset reset. 
Once the online batch data is collected, we feed it to a policy optimization oracle, e.g., PG, NPG, Actor-critic methods, or a PPO-style update \footnote{Here we mean the specific actor-critic style policy optimization formulation where clipping is used to ensure small policy update, and critic is learned via GAE, on a given online batch data \citep{schulman2017proximal}.}.  

\begin{algorithm}[htbp]
	\caption{Dataset Reset Policy Optimization (DR-PO)}
	\label{alg:drpo}
	\begin{algorithmic}[1]
		\STATE \textbf{Input}: Preference dataset $\DR$, unlabeled dataset $\DTR$, reward function class $\Rc$,  total number of iterations $T$.
		\STATE \textbf{Initialize}: $\pi^1=\pir$. 
		\STATE Learn a reward model $\hr$ via MLE based on Eq.~\eqref{eq:rewardmle}.
		\FOR{$t=1,\cdots,T$}
			\STATE Initialize an empty online batch $\mathcal{D}_{on}$.
   
            \textcolor{mygreen}{/* Online data collection */}
			\FOR{$n = 1, \cdots N$} 
				\STATE Randomly sample a trajectory in $\DTR$ and a state $s_h$ from it where $h\in[H]$.
				\STATE \textcolor{red}{Reset} $\pi^t$ to $s_h$ and rollout $\pi^t$ to generate trajectory $\{s_{h},a_{h},\dots, s_H, a_H\}$.
				\STATE Add trajectory $\{s_{h'},a_{h'}, \hr(s_{h'},a_{h'}), \ln( \pi^t(a_{h'}|s_{h'}) / \pir(a_{h'}|s_{h'})  )\}_{h'=h}^H$ to $\mathcal{D}_{on}$. \label{line:online_data}
			\ENDFOR
			\STATE \textcolor{red}{Policy update}: $\pi_{t+1} \Leftarrow \text{PO}( \pi^t, \mathcal{D}_{on})$.  \COMMENT {\textcolor{blue}{PG, NPG / TRPO, CPI, Actor-Critic, PPO}}
		\ENDFOR
	\end{algorithmic}
\end{algorithm}

Algorithm~\ref{alg:drpo} summarizes the key idea of dataset reset in DR-PO. The key difference between DR-PO and a more standard policy optimizer is that in DR-PO, for each episode, the policy collects online trajectories via \textbf{resetting to a state randomly sampled from some trajectory in the offline dataset $\DTR$}. In other words, we do not rollout the policy $\pi$ from the initial state $s_1$ as typically done in standard policy optimization algorithms like PG.  
The online data collection procedure collects a batch of online trajectories $\mathcal{D}_{on}$. Note for each online trajectory, we record each state-action pair's reward measured under the learned reward model $\hr$, and also the log ratio of $\pi^t$ and $\pir$ which serves as an empirical estimate of the policy KL divergence, i.e., $\KL( \pi^t(s_{h'} ) || \pir(s_{h'}))$.  Such a KL divergence term can be optionally used as a reward penalty to ensure the learned policies do not deviate too far from $\pir$ so that the reward model $\hr$ stays as a good approximation of the true reward $\rs$ under learned policies' trajectory distributions. We use this KL penalty both in theory and in practice. 

Once the online data is collected, we feed it to a policy optimization oracle $\text{PO}$ for a policy update. A PO oracle can be a PG, NPG, or PPO style update. To be more specific, for a PPO style update procedure, we use $\mathcal{D}_{on}$ to fit a critic for advantage estimation $\widehat{A}(s,a)$\footnote{when using KL penalty, this advantage function measures the advantage under KL regularized reward --- $\hr - \lambda \KL$ with $\lambda \in \mathbb{R}^+$ as coefficient for the KL penalty. } (e.g., via generalized advantage estimation used in PPO), and then update the policy on $\mathcal{D}_{on}$ with the clipping trick: $\pi^{t+1} \Leftarrow \argmax_{\pi} \sum_{s,a\in \mathcal{D}_{on}} \Clip\left( \frac{ \pi(a|s) }{ \pi_t(a|s)} \right)\widehat A(s,a)$. This is the policy update that we use in our experiments. In our theory, we use NPG as the PO oracle. While PPO and NPG are different when it comes to exact implementation, PPO can be understood as a heuristic that approximates NPG for the purpose of being more scalable for large-scale optimization (e.g., the clipping trick induced by PPO is approximately trying to ensure that the new policy does not deviate too much from the old one -- a key property that NPG methods advocated for \citep{kakade2001natural, kakade2002approximately, bagnell2003covariant, schulman2015trust}). 

Implementation wise, with PPO as a PO oracle, given a standard PPO implementation, all we need to do is to feed the policy optimization and GAE oracles in PPO using the online batch of data collected in our way, i.e., $\mathcal{D}_{on}$ collected via dataset reset. Our experiments on two RLHF datasets show that hyperparameters that work well for PPO also work for DR-PO. 

\section{Theoretical Analysis}
\label{sec:theory}
In this section, we analyze the DR-PO (Alg~\ref{alg:drpo}) by instantiating the policy optimization oracle $\text{PO}$ to be a Natural Policy Gradient (NPG) oracle. For completeness, we describe $\text{PO}$ in Algorithm~\ref{alg:npg}, which in high level consists of policy evaluation via least square regression, and then policy update via Mirror Descent style procedure. We leave the detailed full description of the algorithm in Appendix~\ref{sec:algun}. 

In Alg.~\ref{alg:npg}, we use the online data to fit a $Q$ function estimate of the current policy $\pi^t$. Once we learn the critic, we perform policy update via running KL-based Mirror Descent. Note that this step has a closed-form expression for $\pi^{t+1}$:
\begin{align*}
	 	\pi^{t+1}(a|s)\propto\left(\pir(a|s) \right)^{\frac{\eta\lambda}{\eta\lambda+1}}\cdot\left(\pi^t(a|s)\right)^{\frac{1}{\eta\lambda+1}} \cdot \exp \left(\frac{\eta}{\eta\lambda+1}\cdot Q(s,a)\right)
\end{align*}
Note that the KL penalty to $\pir$ in the policy update procedure is important to ensure that $\pi^{t+1}$ does not deviate too much from $\pir$. Also this type of updates ensures that the support of $\pi^{t}(\cdot | s)$ is always a subset of the support of $\pir(s)$ for all state $s$.

\begin{algorithm}[htbp]
	\caption{NPG update for the \text{PO} oracle in Alg.~\ref{alg:drpo}}
	\label{alg:npg}
	\begin{algorithmic}[1]
		\STATE \textbf{Input}: Online dataset $\mathcal{D}_{on}$, the previous policy $\pi^t$, Q function class $\Fc$, regularization parameter $\lambda$, learning rate $\eta$
		\STATE Create an empty regression dataset $\mathcal{D}$.
		\FOR{each (partial) trajectory $\tau$ in $\mathcal{D}_{on}$}
			\STATE Take the first state-action pair $(s_h,a_h)$ in $\tau$ and calculate the total reward $y = \sum_{h'=h}^{H} \hr(s_{h'},a_{h'}) $
			\STATE Add $((s_h,a_h), y)$ to $\mathcal{D}$
		\ENDFOR
		\STATE \textbf{Learn critics}: 
		$$Q=\argmin_{f\in\Fc}\frac{1}{|\Dc|}\sum_{(s,a,y)\in\Dc}\left[\left(f(s,a)-y\right)^2\right].$$
		\STATE \textbf{Policy update}:
      \begin{align}
    		\pi^{t+1}(s)=\argmin_{p\in\Delta(\Ac)}\left\langle -Q(s,\cdot), p\right\rangle \notag+ \lambda\KL(p\Vert\pir(s)) + \frac{1}{\eta}\KL(p\Vert\pi^t(s)), \forall s.
      \end{align}
	\end{algorithmic}
\end{algorithm}

\begin{remark}
Though we mainly focus on the settings where we can reset, when resetting is not possible (e.g., real robotics applications), we can implement the reset by a roll-in and roll-out procedure since we have access to $\pir$: we roll-in $\pir$ to some $s_h$, and then continue by rolling out our policy that is being optimized. This procedure is closely related to the \text{PPO++} algorithm proposed in \citet{chang2023learning}, where the authors empirically demonstrated that it outperforms vanilla PPO on some RLHF benchmarks (but no detailed theoretical investigation). When resetting is available, by directly resetting to the offline data generated by $\pir$, we further reduce computation. 
\end{remark}

\subsection{Theoretical Sample Complexity}
Now we introduce the required assumptions in our analysis.

\paragraph{Function classes.}
We first assume that the reward function class and $Q$ function class are realizable and bounded:
\begin{assumption}[reward function classes]
	\label{ass:reward}
Suppose that we have $\rs\in\Rc$. In addition, assume that $0\leq r(\tau)\leq\rmax$ for all $r\in\Rc$ and trajectory $\tau$.
\end{assumption}
\begin{assumption}[$Q$ function classes]
	\label{ass:Q}
Suppose that we have $Q^{\pi^t,\hr}\in\Fc$ for all $t\in[T]$. In addition, assume that $0\leq f(s,a)\leq \rmax$ for all $f\in\Fc,s\in\Sc,a\in\Ac$.
\end{assumption}

Realizability is a standard assumption used in the theoretical analysis of supervised learning. It is possible to extend our analysis to the setting where model-misspecification exists, and we leave this extension as a future work. 

\paragraph{Concentrability.} Then we assume that $\pir$ can cover the comparator policy $\pis$. In addition, we know the learned policy $\hpi$ is close to $\pir$ in terms of Kl divergence due to the regularizer $\KL(\cdot\Vert\pir)$ in the mirror descent step. Thus, to deal with distribution shift, we also assume $\pir$ can cover the policies which are close to itself:
\begin{assumption}[single-policy concentrability]
	\label{ass:conc}
	Suppose that we have for any $\BKL\geq0$:
	\begin{align*}
&(1)\max_{\tau}\frac{d^{\pis}(\tau)}{d^{\pir}(\tau)}= \CTR <\infty ;\\
&(2)\max_{h\in[H],s\in\Sc_h,a\in\Ac}\frac{d^{\pis}_h(s,a)}{d^{\pir}_h(s,a)}= \CST < \infty; \\
&(3)\max_{\pi\in\Theta(\pir,\BKL),\tau}\frac{d^{\pi}(\tau)}{d^{\pir}(\tau)}= \CSFT(\BKL),
	\end{align*}
 where $\Theta(\pir,\BKL):=\{\pi:\KL(\pi(s)\Vert\pir(s))\leq\BKL,\forall s\in\Sc\}$.
\end{assumption}

Note that in Assumption~\ref{ass:conc} we need $\pir$ to cover $\pis$, both trajectory-wise and state-action-wise. In particular, we always have $\CST\leq\CTR$. Assuming trajectory-wise covering is necessary in RLHF because the human feedback is also trajectory-wise, as shown by the lower bounds in \cite{zhan2023provableoff}.  Intuitively, if the offline data only covers low performance policies' traces, then the learned reward model cannot guarantee to recognize trajectories from a high performance policy during test time (because it has never seen such things in training). 

\begin{remark}
We can indeed relax Assumption~\ref{ass:conc} by leveraging the information in $\Rc$ and $\Fc$, as shown in the discussion in Appendix~\ref{sec:proof-rlhf}.
\end{remark}

\begin{remark}
Note that we have $\CSFT(\BKL)<\infty$ for all $\BKL<\infty$ naturally because $\pi\in\Theta(\pir,\BKL)$ has bounded KL diveregnce with respect to $\pir$.
\end{remark}

Under the above assumptions, we have the following theorem to characterize the suboptimality of $\hpi$ returned by Algorithm~\ref{alg:rlhf}. Recall that $\kappa=\frac{1}{\inf_{x\in[-\rmax,\rmax]}\Phi'(x)}$ measures the non-linearity of the link function $\Phi$.

\begin{theorem}
\label{thm:rlhf}
Suppose Assumption~\ref{ass:reward},\ref{ass:Q},\ref{ass:conc} hold. For any $\delta\in(0,1]$, let 
\footnotesize
\begin{align*}
\emle:=\Theta\left(\sqrt{\frac{\kappa^2}{M}\log\frac{|\Rc|}{\delta}}\right), 
\eeval:=\Theta\left(\sqrt{\frac{\rmax^2}{N}\log\frac{T|\Fc|}{\delta}}\right),
\end{align*}
\normalsize
and set $\eta = \sqrt{\frac{1}{T\rmax^2}}$, then with probability at least $1-\delta$, we have Algorithm~\ref{alg:drpo} with NPG update (Algorithm~\ref{alg:npg}) returns a policy $\hpi$ which satisfies
\begin{align}
\label{eq:subopt}
V^{\pis,\rs}(s_1)-V^{\hpi,\rs}(s_1)\leq \underbrace{(\sqrt{\CTR}+\sqrt{\CSFT(T\rmax/\lambda)})\emle}_{(1)} + \underbrace{2H\sqrt{\CST}\eeval}_{(2)} + \underbrace{\frac{2H^{\frac{3}{2}}\rmax\log\CST}{\sqrt{T}} + \lambda H\log\CST}_{(3)}.
\end{align}
\end{theorem}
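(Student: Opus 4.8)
The plan is to first peel off the error of the learned reward model $\hr$ and then control the optimization error of the NPG iterates under $\hr$. Writing $\hpi$ for the uniform mixture $\bpi=\Unif\{\pi^t\}_{t=1}^T$ so that $V^{\hpi,r}(s_1)=\tfrac1T\sum_t V^{\pi^t,r}(s_1)$ for any reward $r$, I would decompose
\begin{align*}
V^{\pis,\rs}(s_1)-V^{\hpi,\rs}(s_1)
= \underbrace{\big(V^{\pis,\hr}(s_1)-V^{\hpi,\hr}(s_1)\big)}_{\text{(opt)}}
+ \underbrace{\big(\Eb_{\pis}[g]-\Eb_{\hpi}[g]\big)}_{\text{(rew)}},
\end{align*}
where $g(\tau):=\rs(\tau)-\hr(\tau)$ and $\Eb_\pi[g]=\Eb_{\tau\sim d^\pi}[g(\tau)]$. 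The key observation for (rew) is that the BTL/MLE objective identifies the reward only up to an additive constant, so I subtract $c:=\Eb_{\tau\sim d^{\pir}}[g(\tau)]$ from both occurrences of $g$; the constant cancels, leaving $\Eb_{\pis}[g-c]-\Eb_{\hpi}[g-c]$. The standard MLE generalization bound for the link function $\Phi$ (with $\kappa$ converting the log-loss/Hellinger guarantee into a squared reward-difference guarantee) gives, on a $1-\delta$ event, $\Eb_{\tau^0,\tau^1\sim\pir}[(g(\tau^0)-g(\tau^1))^2]\lesssim\emle^2$; since $\tau^0,\tau^1$ are i.i.d.\ this equals $2\,\Var_{\pir}[g]$, hence $\Var_{\pir}[g]\lesssim\emle^2$. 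Applying Jensen and a change of measure, $\Eb_{\pis}[g-c]\le\sqrt{\Eb_{\pis}[(g-c)^2]}\le\sqrt{\CTR\,\Var_{\pir}[g]}\le\sqrt{\CTR}\,\emle$, and likewise $-\Eb_{\hpi}[g-c]\le\sqrt{\CSFT(\BKL)}\,\emle$, which produces term~(1).

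For the optimization term (opt) I would run the standard NPG/mirror-descent analysis, but on the reward $\hr$ and with the per-state KL regularizer toward $\pir$. The performance difference lemma rewrites, for each $t$,
\begin{align*}
V^{\pis,\hr}(s_1)-V^{\pi^t,\hr}(s_1)=\sum_{h=1}^H\Eb_{s\sim d^{\pis}_h}\big[\langle Q^{\pi^t,\hr}(s,\cdot),\,\pis(\cdot|s)-\pi^t(\cdot|s)\rangle\big].
\end{align*}
I then replace the true $Q^{\pi^t,\hr}$ by the least-squares critic $Q$ learned in Algorithm~\ref{alg:npg}; this is where the reset is essential. Because the reset states are drawn from trajectories in $\DTR\sim\pir$ and $\pi^t$ is rolled out from them, the regression targets are unbiased estimates of $Q^{\pi^t,\hr}$ under the training distribution $d^{\pir}_h$, so the usual squared-loss generalization bound controls the critic error in $\ell_2(d^{\pir}_h)$ at rate $\eeval$ (a union bound over $t\in[T]$ supplies the $\log(T|\Fc|/\delta)$). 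Transporting this error from $d^{\pir}_h$ to the evaluation measure $d^{\pis}_h$ costs $\sqrt{\CST}$ by Cauchy--Schwarz, and summing over $h$ gives term~(2), namely $2H\sqrt{\CST}\,\eeval$. The remaining exact-$Q$ inner products are handled by per-state mirror descent: averaging over $t$, the regret against the comparator $\pis(\cdot|s)$ is at most $\frac{\KL(\pis(s)\Vert\pir(s))}{\eta T}+\frac{\eta\rmax^2}{2}$ plus the KL-regularization bias; using the concentrability bound $\KL(\pis(s)\Vert\pir(s))\le\log\CST$, summing over the $H$ steps, and plugging in $\eta=\sqrt{1/(T\rmax^2)}$ yields the $\frac{2H^{3/2}\rmax\log\CST}{\sqrt T}$ rate, while the $\lambda$-regularizer contributes the bias $\lambda\,\Eb_{\pis}[\sum_h\KL(\pis(s_h)\Vert\pir(s_h))]\le\lambda H\log\CST$; together these give term~(3).

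Finally I would pin down the KL radius $\BKL$ entering $\CSFT(\BKL)$ by analyzing the closed-form update: from the log-ratio recursion $\ln\frac{\pi^{t+1}(a|s)}{\pir(a|s)}=\tfrac{1}{\eta\lambda+1}\ln\frac{\pi^t(a|s)}{\pir(a|s)}+\tfrac{\eta}{\eta\lambda+1}Q(s,a)-\ln Z$ with $Q\in[0,\rmax]$ and $\pi^1=\pir$, the iterates stay in a bounded KL ball of $\pir$, which I would bound by $\BKL=T\rmax/\lambda$; hence every $\pi^t$ (and thus the mixture $\hpi$, since $d^{\hpi}=\tfrac1T\sum_t d^{\pi^t}$) lies in $\Theta(\pir,T\rmax/\lambda)$ and $\CSFT(T\rmax/\lambda)$ applies in the bound for (rew). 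Collecting terms (1)--(3) on the single $1-\delta$ event and setting $\eta$ as stated proves the claim. I expect the main obstacle to be the joint control in the optimization step: the NPG regret must be established using the \emph{estimated} critic, so the argument has to interleave the least-squares error with the mirror-descent drift, while simultaneously guaranteeing that the iterates never leave the KL ball around $\pir$ on which $\hr$ and the concentrability coefficients remain valid --- that is, proving progress toward $\pis$ without over-optimizing $\hr$ off the support of the offline data.
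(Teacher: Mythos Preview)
Your decomposition and overall strategy match the paper's exactly: split into reward-transfer error and optimization error under $\hr$; handle the former via the MLE bound plus trajectory-wise concentrability ($\CTR$ for $\pis$, $\CSFT$ for the iterates), and the latter via the performance-difference lemma, least-squares critic error, and per-state mirror descent. Two technical points need tightening.

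First, the pointwise bound $\KL(\pis(s)\Vert\pir(s))\le\log\CST$ is not true in general: $\CST$ controls $d^{\pis}_h(s,a)/d^{\pir}_h(s,a)$, not the conditional ratio $\pis(a|s)/\pir(a|s)$. What does hold, and is all you need, is the integrated version $\sum_h\Eb_{s_h\sim d^{\pis}_h}[\KL(\pis(s_h)\Vert\pir(s_h))]\le H\log\CST$, obtained by adding the nonnegative state-marginal KL $\sum_h\KL(d^{\pis}_h\Vert d^{\pir}_h)$ and collapsing to the joint state-action KL. Your final $H\log\CST$ is correct, but the justification has to go through the expectation.

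Second, and more substantively, your $\sqrt{\CST}$ transport of the critic error does not go through with the data collection you describe. If the reset action is $a_h\sim\pi^t(\cdot|s_h)$, the regression controls $(\hQ-Q^{\pi^t,\hr})^2$ in $\ell_2$ under $s\sim d^{\pir}_h,\ a\sim\pi^t(s)$. The performance-difference error $\Eb_{s\sim d^{\pis}_h}\langle\hQ-Q^{\pi^t,\hr},\pis-\pi^t\rangle$ splits into an $a\sim\pi^t$ piece and an $a\sim\pis$ piece. The first is fine: the ratio is $d^{\pis}_h(s)/d^{\pir}_h(s)\le\CST$. But the second would require a bound on $d^{\pis}_h(s)\pis(a|s)\big/\big(d^{\pir}_h(s)\pi^t(a|s)\big)$, and nothing in $\CST$ (or in the KL closeness of $\pi^t$ to $\pir$) controls the factor $\pir(a|s)/\pi^t(a|s)$ pointwise. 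The paper resolves this in its detailed theoretical algorithm (Appendix~\ref{sec:algun}) by sampling the \emph{first} action at the reset state from the mixture $\tfrac12\pir+\tfrac12\pi^t$ and only then rolling out $\pi^t$; with that tweak the training action density has a $\tfrac12\pir(a|s)$ component, so the $a\sim\pis$ ratio becomes $2\,d^{\pis}_h(s,a)/d^{\pir}_h(s,a)\le 2\CST$, and Cauchy--Schwarz delivers exactly the $2H\sqrt{\CST}\,\eeval$ term. Once you incorporate this modification, the rest of your plan (including the $T\rmax/\lambda$ KL-ball for the iterates, which the paper proves by a one-line comparison with the mirror-descent minimizer rather than your log-ratio recursion) goes through.
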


Theorem~\ref{thm:rlhf} indicates that the suboptimality of $\hpi$ scales with $\frac{1}{M}$ and $\frac{1}{N}$ polynomially. More specifically, term (1) in Equation~\ref{eq:subopt} measures the estimation error of the reward, (2) is the Q function estimation error and (3) is the optimization error of NPG. We can see that there exists a tradeoff between the estimation error and optimization error. With increasing $T$ and decreasing $\lambda$, the optimization error (3) will decrease while the distirbution shift coefficient $\CSFT$ will become larger, leading to amplified estimation error. In particular, from Theorem~\ref{thm:rlhf}, we can obtain the following sample complexity of DR-PO by setting $T$ and $\lambda$ appropriately:

\begin{corollary}
\label{cor:rlhf}
Suppose Assumption~\ref{ass:reward},\ref{ass:Q},\ref{ass:conc} hold and set 
\footnotesize
\begin{align*}
T = \frac{36H^3\rmax^2\log^2\CST}{\eps^2}, \eta = \sqrt{\frac{1}{T\rmax^2}},\lambda =\frac{\eps}{3H\log\CST},
\end{align*}
\normalsize
then if we have
\begin{align*}
&M =\Omega\left(\frac{\left(\CTR+\CSFT(108H^4\rmax^3\log^3\CST/\eps^3)\right)\kappa^2}{\eps^2}\log\frac{|\Rc|}{\delta}\right),\\
&N = \Omega\left(\frac{H^{2}\rmax^{2}\CST}{\eps^2}\log\frac{T|\Fc|}{\delta}\right),
\end{align*}
we have with probability at least $1-\delta$ that Algorithm~\ref{alg:drpo} with NPG update (Algorithm~\ref{alg:npg}) returns a policy $\hpi$ which satisfies
\begin{align*}
	V^{\pis,\rs}(s_1)-V^{\hpi,\rs}(s_1)\leq\eps.
\end{align*}
\end{corollary}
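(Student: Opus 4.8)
The plan is to treat \cref{thm:rlhf} as a black box and verify that the stated choices of $T,\eta,\lambda$, together with the sample-size conditions on $M$ and $N$, force each of the three terms on the right-hand side of \eqref{eq:subopt} below an appropriate fraction of $\eps$. There is no new probabilistic content here: the high-probability event and the definitions of $\emle,\eeval$ are inherited from the theorem, so the entire argument is a deterministic substitution-and-balancing computation on the bound \eqref{eq:subopt}, carried out on the same $1-\delta$ event.

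First I would dispose of the optimization term~(3), which depends only on $T$ and $\lambda$. Substituting $T=36H^3\rmax^2\log^2\CST/\eps^2$ gives $\sqrt{T}=6H^{3/2}\rmax\log\CST/\eps$, so the first piece $2H^{3/2}\rmax\log\CST/\sqrt{T}$ equals exactly $\eps/3$; substituting $\lambda=\eps/(3H\log\CST)$ makes the second piece $\lambda H\log\CST$ equal to $\eps/3$ as well. Hence term~(3) contributes exactly $2\eps/3$, and it remains to drive terms~(1) and~(2) below $\eps/3$ in total, say $\eps/6$ each. For the critic term~(2)$\,=2H\sqrt{\CST}\,\eeval$, plugging in $\eeval=\Theta(\sqrt{\rmax^2\log(T|\Fc|/\delta)/N})$ and requiring $2H\sqrt{\CST}\,\eeval\le\eps/6$ rearranges to $N=\Omega(H^2\rmax^2\CST\log(T|\Fc|/\delta)/\eps^2)$, which is exactly the stated condition; the absolute constant hidden in the $\Omega(\cdot)$ is what realizes the factor $\tfrac16$.

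For the reward term~(1)$\,=(\sqrt{\CTR}+\sqrt{\CSFT(T\rmax/\lambda)})\,\emle$, the one genuine bookkeeping step is that with the chosen $T$ and $\lambda$ the induced KL radius is $T\rmax/\lambda=108H^4\rmax^3\log^3\CST/\eps^3$, matching the argument of $\CSFT$ in the statement. Using $(\sqrt{\CTR}+\sqrt{\CSFT})^2\le 2(\CTR+\CSFT)$ together with $\emle=\Theta(\sqrt{\kappa^2\log(|\Rc|/\delta)/M})$, the requirement term~(1)$\,\le\eps/6$ rearranges to $M=\Omega((\CTR+\CSFT(108H^4\rmax^3\log^3\CST/\eps^3))\,\kappa^2\eps^{-2}\log(|\Rc|/\delta))$, again as stated. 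Summing the three contributions yields $V^{\pis,\rs}(s_1)-V^{\hpi,\rs}(s_1)\le\eps/6+\eps/6+2\eps/3=\eps$ with probability at least $1-\delta$.

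The computation is routine; the only place demanding care is evaluating the concentrability coefficient $\CSFT$ at the \emph{induced} radius $\BKL=T\rmax/\lambda$ rather than at a fixed one --- one must substitute the chosen $T,\lambda$ into $\BKL=T\rmax/\lambda$ before reading off the sample complexity, since $\CSFT(\cdot)$ is increasing in its argument and a careless bound could blow it up. A secondary point worth checking is that the split of $\eps$ across the three terms is internally consistent: because term~(3) is pinned to $2\eps/3$ by the exact choices of $T$ and $\lambda$, the remaining budget for the two statistical terms is only $\eps/3$, so the constants absorbed into the $\Omega(\cdot)$ for $M$ and $N$ must be chosen to give $\eps/6$ each (rather than $\eps/4$); this reallocation is absorbed into the hidden constants and does not change the stated asymptotic sample complexities.
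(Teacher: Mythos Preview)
Your proposal is correct and is exactly the approach the paper intends: the corollary is stated as an immediate consequence of \cref{thm:rlhf} with no separate proof in the paper, so the only work is the substitution-and-balancing computation you carry out. Your arithmetic is accurate, including the key point that the specified $T$ and $\lambda$ pin term~(3) at exactly $2\eps/3$ (forcing the remaining $\eps/3$ to be split between the statistical terms), and that $T\rmax/\lambda=108H^4\rmax^3\log^3\CST/\eps^3$ matches the argument of $\CSFT$ in the statement.
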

Theorem~\ref{thm:rlhf} and Corollary~\ref{cor:rlhf} indicate that DR-PO with NPG update is capable of finding an $\eps$-optimal policy with polynomial sample complexity, i.e., $\TO(1/\eps^2)$ labeled trajectory pairs and unlabeled trajectories. Algorithmically, our algorithm does not require pessimism and is model-free, which is much easier and more practical than the pessimistic model-based algorithm proposed in \cite{zhan2023provableoff}.

\begin{remark}
In Theorem~\ref{thm:rlhf} and Corollary~\ref{cor:rlhf} we assume $\Rc$ and $\Fc$ are finite, but our results can be extended to infinite classes directly by replacing $|\Rc| (|\Fc|)$ with their covering numbers.
\end{remark}

\section{Experiments}
We empirically evaluate DR-PO's ability to learn from dataset \textit{resets}. First, we test how well DR-PO is able to both efficiently optimize the reward score as well as minimize the KL-divergence with the reference policy. We also test the generation quality of our resulting policies in terms of Rouge \citep{lin2004rouge} and win rate \citep{rafailov2023direct} against human references measured by GPT4 \citep{achiam2023gpt}. Next, we conduct an ablation study, incrementally relaxing the the proportion of dataset resets in our online data collection to study how sensitive DR-PO is to this hyperparameter. We investigate DR-PO's performance when transferring to another summarization task such as CNN/DailyMail \citep{cnndm}. Finally, we conduct a scaling experiment on Anthropic HH by varying model sizes ranging from 1B to 7B. We find that collecting online generations with dataset resets results in a policy with a better tradeoff between reward optimization and KL-divergence, leading to improved generations over baseline RL algorithms, PPO \citep{schulman2017proximal} and Direct Preference Optimizaion (DPO) \citep{rafailov2023direct}.

\paragraph{Tasks}
We evaluated DR-PO on the TL;DR summarization dataset used in \citet{stiennon2020learning}\footnote{Dataset can be obtained from \url{https://github.com/openai/summarize-from-feedback}} and tested scaling performance on the Anthropic Helpful Harmful (HH) task \citep{bai2022constitutional}. For TL;DR, a model is trained to generate summaries of online Reddit posts guided by human preference data. The task consists of two datasets: one with human reference summaries and another with preference data. Following the standards set by both \citet{stiennon2020learning} and \citet{rafailov2023direct}, we train our reward models and DPO baseline on the preference dataset while performing online RL (for PPO and DR-PO) on the human reference dataset. We set the maximum context length to be 512 and the maximum generation length to be 53, ensuring that it is possible to generate all references in the dataset. For Anthropic HH, the model is asked to respond to a dialogue sequence in a helpful, harmless manner. We follow much of design choices from TRLx\footnote{\url{https://github.com/CarperAI/trlx}} for dataset processing, context length, and generation length. For more details about the dataset, please see \cref{app:details}

\paragraph{Evaluation}
To test the performance of DR-PO against our baselines we evaluate each method by its tradeoff between reward model score and KL-divergence with the reference policy, testing the effectiveness of the algorithm in optimizing the regularized RLHF objective. Furthermore, we compute the Rouge score and GPT4 win rate to evaluate the generation quality of our resulting policies. Note for our win rate calculation, we report the win rate of a randomly sampled subset (10\%) of the test set for a total of 600 samples. Please see \cref{app:winrate} for the prompt used to query GPT4 as well as an example response. When evaluating the on CNN/DailyMail we make use of the constructed preference dataset from \citet{stiennon2020learning} and for training a supervised finetuned model, we use HuggingFace's dataset version 2.0.0\footnote{\url{https://huggingface.co/datasets/cnn_dailymail}}.

\paragraph{Methods}
We instantiate DR-PO by using PPO style policy optimization \citep{schulman2017proximal} as the policy optimizer (PO in Algorithm~\ref{alg:drpo}). First for TL;DR, we maintain the same pretrained LLM and supervised finetuned model for all of our experiments. For supervised finetuning, we trained a Pythia 2.8B\footnote{HuggingFace Model Card: \url{EleutherAI/pythia-2.8b-deduped}} \citep{biderman2023pythia} parameter model for 1 epoch over the dataset with human references as labels. Similarly for the reward model, we trained a Pythia 2.8B parameter model for 1 epoch over the preference labeled dataset. Then, for DPO, PPO, and DR-PO, we trained our policy and critic with low rank adapters (LoRA) \citep{hu2022lora} on top of our supervised finetuned (SFT) model and our reward model (RM) respectively. Finally for our scaling experiments for Anthropic HH, we trained Pythia 125M, 1B, and 6.9B parameter models for 1 epoch over the HH dataset for both SFT and RM training. Please see \cref{app:details} for details and \cref{ssec:reset_pseudocode} for pseudocode to implement resets.

\begin{table}[t]
    \centering
    \begin{tabular}{@{}lcccccc@{}}
        \toprule
        \textbf{Algorithms} & \multicolumn{6}{c}{\textbf{TL;DR Summarization}} \\
        \cmidrule(lr){2-7}
        & Win Rate & RM Score & KL$(\pi||\pi_{ref})$ & Rouge 1 & Rouge 2 & RougeL \\
         & ($\uparrow$) & ($\uparrow$) & ($\downarrow$) & ($\uparrow$) & ($\uparrow$) & ($\uparrow$)  \\
        \cmidrule(lr){1-1}\cmidrule(lr){2-7}
        \texttt{SFT}   & 31.6 $\pm$ 0.2\%          & -0.51 $\pm$ 0.04         & -                         & 32.17 $\pm$ 1.01         & 12.27 $\pm$ 0.67          & 24.87 $\pm$ 1.22 \\
        \texttt{DPO}   & 52.6 $\pm$ 0.4\%          & -                        & 37.33 $\pm$ 2.01          & 30.03 $\pm$ 3.23         & 7.93 $\pm$ 1.02           & 22.05 $\pm$ 0.83 \\
        \texttt{PPO}   & 62.3 $\pm$ 2.5\%          & 1.17 $\pm$ 0.13          & \textbf{16.32 $\pm$ 1.46} & \textbf{33.73 $\pm$ 2.34} & \textbf{11.97 $\pm$ 0.91} & 24.97 $\pm$ 1.03 \\
        \texttt{DR-PO} & \textbf{70.2 $\pm$ 1.7\%} & \textbf{1.52 $\pm$ 0.09} & 16.84 $\pm$ 0.83          & 33.68 $\pm$ 1.78         & 11.90 $\pm$ 0.06          & \textbf{25.12 $\pm$ 0.76}\\
        \bottomrule 
    \end{tabular}%
    \caption{\textbf{TL;DR Summarization Results:} Our RM Score is under our trained preference reward model and the win rate is evaluated by GPT4. All evaluated policies except for \texttt{SFT} are models with LoRA adapters. We present results across 3 seeds.}
    \label{tbl:tldr_results}
\end{table}
\begin{figure}[t]
    \centering
    \includegraphics{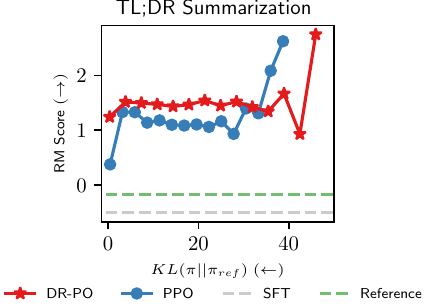}
    \caption{\textbf{Reward vs KL-Divergence Frontier:} Plotting the regularized optimization tradeoff between DR-PO and our baselines over the entire test set. DR-PO is able to achieve a much better tradeoff by learning higher reward generations with lower KL. The average reference and SFT scores under the \texttt{RM} are shown as dashed lines.}
    \label{fig:tldr_pareto}
\end{figure}

\subsection{How well can DR-PO optimize the RLHF objective?}

\cref{tbl:tldr_results} compares DR-PO against PPO, DPO, and supervised finetuning. The KL-regularized reward optimization broadly used in RLHF as well as analyzed in \cref{sec:theory} balances reward exploitation and deviation from a reference policy. When computing the KL-divergence, we use our SFT policy as our reference policy for all our methods. Notably, DR-PO scores a higher RM value over the test set over all baselines with a slightly larger KL discrepancy than PPO. We also see that with GPT4 win rate, DR-PO achieves the highest preference over human references showcasing the benefit of learning from resets. \cref{fig:tldr_pareto} plots a more detailed frontier of the reward and KL tradeoff for DR-PO and PPO. We generate this plot by binning the test scores according to KL. We see that for most KL values, DR-PO is able to achieve a higher score than PPO.

\subsection{Analysis of Dataset Reset Proportion}
\begin{table}[h]
    \centering
    \begin{tabular}{@{}lccc@{}}
        \toprule
        \textbf{Algorithms} & Win Rate & RM Score & KL$(\pi||\pi_{ref})$  \\
         & ($\uparrow$) & ($\uparrow$) & ($\downarrow$) \\
        \cmidrule(lr){1-1}\cmidrule(lr){2-4}
        \texttt{PPO}  & 60.7\% & 1.14 & 15.08 \\
        \texttt{DR-PO} ($\beta=0.25$) & 61.7\% & 1.28 & 14.77 \\
        \texttt{DR-PO} ($\beta=0.5$) & 66.5\% & 1.28 & 15.63 \\
        \texttt{DR-PO} ($\beta=0.75$) & 64.3\% & 1.25 & \textbf{14.32} \\
        \texttt{DR-PO} ($\beta=1.0$) & \textbf{68.5\%} & \textbf{1.47} & 16.65 \\
        \bottomrule 
    \end{tabular}%
    \caption{\textbf{DR-PO Ablation of Datset Reset Proportion:} Our RM Score is under our trained preference reward model and the Win Rate is evaluated by GPT4. $\beta$ represents the proportion of online data generated from dataset resets with 1.0 being all generations are from resets and 0.0 being PPO (i.e., always reset to initial prompts). The values on this table are across one seed.}
    \label{tbl:tldr_beta_ablation}
\end{table}
\begin{figure}[t]
    \centering
    \includegraphics{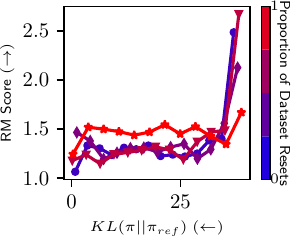}
    \caption{\textbf{Ablation of Dataset Reset:} Plotting the RM score and KL-Divergence tradeoff as a function of dataset reset proportion. \textcolor{blue}{Blue} represents no mixing while \textcolor{red}{red} represents every online generation starting from a reset.}
    \label{fig:tldr_beta_ablation}
\end{figure}
Next, we investigate how sensitive DR-PO is to the amount of dataset resets done during online generation. We define $\beta$ as the proportion of generations in a given online batch of generations with dataset resets. More specifically, our main results are with $\beta=1.0$ which translates to all generations during online training of DR-PO starting from a randomly sampled reset from the human references. Note that a $\beta$ value of 0 recovers the baseline PPO (e.g., all generations start from initial prompts). \cref{tbl:tldr_beta_ablation} shows the expected RM score, KL, and win rate of DR-PO as we increase the mixing proportion from 0\% (PPO) to 100\% (DR-PO) after 2 epochs of training. Notably, even with a small amount of dataset resets DR-PO is able to learn higher scoring generations with a lower KL than PPO. Moreover, we see that DR-PO with any amount of reference resets leads to higher win rates than PPO. \cref{fig:tldr_beta_ablation} plots the RM score/KL-divergence frontier of our learned policies on the test set. Note that DR-PO is robust to the amount of dataset resets in optimizing the regularized RLHF objective. Finally, supporting our analysis from \cref{sec:theory}, DR-PO generally performs better the more online data we gather from resets with a 100\% reset proportion performing the best.

\subsection{DR-PO Transfer Performance}
Finally, we investigate DR-PO's ability to do zero-shot transfer to another summarization task, ensuring that learning a policy by reseting from human references does not diminish the generalization observed with PPO in \citet{stiennon2020learning}. Specifically, we investigate whether leveraging human references on TL;DR has the unintended consequence of overfitting to the specific dataset rather than learning more generally to summarize. For our baselines, we test the zero-shot capabilities of both PPO and DPO as well as report the performance of a supervised finetuned policy on CNN/DailyMail using the same base model, Pythia 2.8B. \cref{tbl:cnn_transfer} demonstrates DR-PO's zero-shot capabilities, being the only policy to outperform a supervised finetuned model on all metrics. Therefore, we see that learning from resets not only improves RLHF on the training task but also the zero-shot transfer performance to another summarization task. 

\begin{table}[tb]
    \centering
    \begin{tabular}{@{}lcccc@{}}
        \toprule
        \textbf{Algorithms} & \multicolumn{4}{c}{\textbf{CNN/DM Summarization}} \\
        \cmidrule(lr){2-5}
        & Win Rate & Rouge 1 & Rouge 2 & RougeL \\
         & ($\uparrow$) & ($\uparrow$) & ($\uparrow$) & ($\uparrow$)  \\
        \cmidrule(lr){1-1}\cmidrule(lr){2-5}
        \texttt{SFT} (CNN/DM)  & 10.5\% & 25.60 & 12.27 & 19.99 \\
        \midrule
        \texttt{DPO}  & 6.0\% & 20.71 & 9.47  & 15.70 \\
        \texttt{PPO}  & 8.5\% & 23.62 & 12.29 & 18.56 \\
        \texttt{DR-PO} & \textbf{12.0}\% & \textbf{29.53} & \textbf{15.36} & \textbf{22.88} \\
        \bottomrule 
    \end{tabular}%
    \caption{\textbf{ Zero-shot transfer to CNN/DM:} the Win Rate is evaluated by GPT4.}
    \label{tbl:cnn_transfer}
\end{table}

\subsection{DR-PO Scaling Performance on Anthropic HH}
\cref{fig:anthropic_scaling} shows DR-PO's performance across different model scales on Anthropic HH task. Specifically we tested three model sizes: 125M, 1B, and 6.9B. We specifically trained on the Pythia models \citep{biderman2023pythia} using TRLx\footnote{\url{https://github.com/CarperAI/trlx}}. We kept the decoding to be the same across all methods here with a sampling temperature of 0.01 as \citet{rafailov2023direct} showed that DPO performed best with greedier sampling. We see that both SFT and DPO showed similar scaling performance gains with PPO and DR-PO scaling better from 1B to 6.9B parameters. \cref{fig:anthropic_scaling} shows that DR-PO has similar scaling improvements as PPO, but performs strictly better and produces generations that are more preferred than those from all of our baselines. 
\begin{figure}[h]
    \centering
    \includegraphics[width=0.5\textwidth]{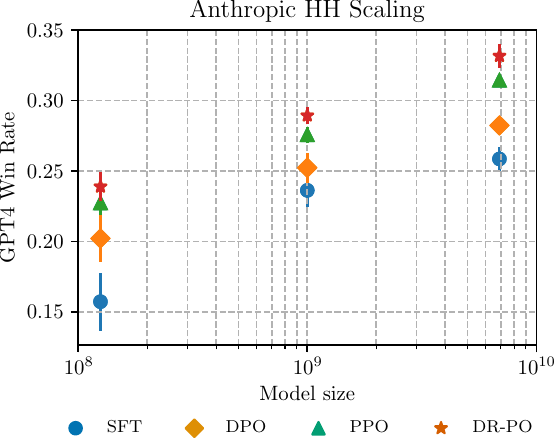}
    \caption{\textbf{Scaling on Anthropic HH:} The GPT4 win rate of DR-PO when tested across 3 model scales: 125M, 1B, and 6.9B. Reported winrates are mean and std across 3 seeds.}
    \label{fig:anthropic_scaling}
\end{figure}

\section{Conclusion}
We present DR-PO, a provably efficient algorithm that exploits a generative model's ability to reset from offline data to enhance RLHF from preference-based feedback. Both in theory and in practice, we demonstrate the effectiveness of incorporating dataset resets into online RL. While in our experiments we specifically demonstrate dataset resets on a PPO style policy optimizer, the idea of dataset reset is both general and simple to implement into any online data collection component of other RL algorithms. We leave it to exciting future work to test the full capabilities of dataset resets in other RLHF methods.

\section*{Acknowledgements}
Wen Sun acknowledges funding from NSF IIS-2154711, NSF CAREER 2339395, and Cornell Infosys Collaboration. Jonathan Chang is supported by LinkedIn under the LinkedIn-Cornell Grant. Kiante Brantley is supported by NSF under grant No. 2127309 to the Computing Research Association for the CIFellows Project.
\bibliographystyle{apalike}

\clearpage
\appendix
\section{DR-PO with NPG}
\label{sec:algun}
\begin{algorithm}[htbp]
	\caption{\textbf{\rlhfalg}}
	\label{alg:rlhf}
	\begin{algorithmic}[1]
		\STATE \textbf{Input}: labeled preference dataset $\DR$, unlabeled dataset $\DTR$, reward function class $\Rc$, Q function class $\Fc$, regularization parameter $\lambda$, stepsize $\eta$, total number of iterations $T$.
		\STATE \textbf{Initialize}: $\pi^1=\pir$. 
		\STATE Learn a reward model $\hat r$ via MLE based on Eq.~\eqref{eq:rewardmle}.
		
		\STATE Let $N_0\gets \frac{N}{T}$. Partition $\DTR$ into $\{\DTRt:=\{\tau^{t,n}\}_{n=1}^{N_0}\}_{t\in[T]}$ with an equal size.

        \textcolor{mygreen}{/* Policy Evaluation with Dataset Reset */}
		\FOR{$t=1,\cdots,T$} 
		\FOR{$n=1,\cdots,N_0$}
        \STATE Sample $h\sim\unif([H])$ and pick the state at step $h$ from $\tau^{t,n}$, denoted by $s^{t,n}_h$.
		\STATE Take action $a^{t,n}_{h}\sim(\frac{1}{2}\pir(s^{t,n}_{h})+\frac{1}{2}\pi^t(s^{t,n}_{h}))$ and then execute $\pi^t$ to step $H$. 
		\STATE Denote the trajectory by $(s_h,a_h,\cdots,s_{H},a_{H})$ and let $y^{t,n}_{h}=\sum_{h'=h}^H\hr_{h'}(s_{h'},a_{h'})$. 
		\STATE Add $(s^{t,n}_{h},a^{t,n}_{h},y^{t,n}_h)$ into $\Dc_{t}$.
		\ENDFOR
		
		\STATE Compute $$\hQ^t=\argmin_{f\in\Fc}L_{\Dc_{t}}(f):=\frac{1}{N_0}\sum_{(s,a,y)\in\Dc_{t}}\left[\left(f(s,a)-y\right)^2\right].$$ 

        \textcolor{mygreen}{/* NPG Update */}
		\STATE Compute for all $s\in\Sc$:
		$$\pi^{t+1}(s)=\argmin_{p\in\Delta(\Ac)}\left\langle -\hQ^t(s,\cdot), p\right\rangle + \lambda\KL(p\Vert\pir(s)) + \frac{1}{\eta}\KL(p\Vert\pi^t(s)).$$  
		\ENDFOR
		
		\STATE \textbf{Output:} $\hpi=\unif(\{\pi^t\}_{t=1}^T)$.
	\end{algorithmic}
\end{algorithm}

\section{Proof of Theorem~\ref{thm:rlhf}}
\label{sec:proof-rlhf}
First we relax the single-policy concentrability in Assumption~\ref{ass:conc} to the following assumptions.
\begin{assumption}[single-policy concentrability w.r.t. the reward class]
	\label{ass:reward-conc}
	Suppose that we have:
	\begin{align*}
		\max\Biggl\{0,~\sup_{r\in\Rc}\frac{\Eb_{\tau^0 \sim d^{\pis},\tau^1 \sim d^{\pir}}[\rs(\tau^0)-\rs(\tau^1)-r(\tau^0)+r(\tau^1)]}{\sqrt{\Eb_{\tau^0 \sim d^{\pir},\tau^1 \sim d^{\pir}}\big[|\rs(\tau^0)-\rs(\tau^1)-r(\tau^0)+r(\tau^1)|^2\big]}}\Biggr\}\leq C_r(\Rc).
	\end{align*}
\end{assumption}
\begin{assumption}[single-policy concentrability w.r.t. Q function class]
	\label{ass:Q-conc}
	Suppose that we have for all $t\in[T]$:
	\begin{align*}
	\sup_{h\in[H],f\in\Fc,\bpi\in\{\pi^t,\pis\}}\frac{\left|\Eb_{s\sim d^{\pis}_h,a\sim\bpi(s)}\left[f(s,a)-\hQ^{\pi^t,\hr}(s,a)\right]\right|}{\sqrt{\Eb_{s\sim d^{\pir}_h,a\sim\left(\frac{1}{2}\pir(s)+\frac{1}{2}\pi^t(s)\right)}\left[\left(f(s,a)-\hQ^{\pi^t,\hr}(s,a)\right)^2\right]}}\leq\Ceval(\Fc)
	\end{align*}
\end{assumption}

\begin{assumption}[single-policy concentrability w.r.t. KL divergence]
	\label{ass:KL-bound}
	Suppose that we have:
	\begin{align*}
		\sum_{h=1}^H\Eb_{s_h\sim d^{\pis}_h}\left[\KL(\pis(s_h)\Vert \pir(s_h))\right]\leq\CKL.
	\end{align*}
\end{assumption}

\begin{assumption}[Concentrability w.r.t. bounded KL-diveregnce policies]
	\label{ass:KL-bound}
	Suppose that we have for any $\BKL\geq0$:
	\begin{align*}
			\max\Biggl\{0,~\sup_{r\in\Rc,\pi\in\Theta(\pir,\BKL)}\frac{\Eb_{\tau^0 \sim d^{\pi},\tau^1 \sim d^{\pir}}[\rs(\tau^0)-\rs(\tau^1)-r(\tau^0)+r(\tau^1)]}{\sqrt{\Eb_{\tau^0 \sim d^{\pir},\tau^1 \sim d^{\pir}}\big[|\rs(\tau^0)-\rs(\tau^1)-r(\tau^0)+r(\tau^1)|^2\big]}}\Biggr\}\leq C_s(\Rc,\BKL).
	\end{align*}
\end{assumption}

Note that from Cauchy-Schwartz inequality we have the following proposition:
\begin{proposition}
\label{prop:relax}
We have
\begin{align*}
&C_r(\Rc)\leq\sqrt{\max_{\tau}\frac{d^{\pis}(\tau)}{d^{\pir}(\tau)}},C_s(\Rc,\BKL)\leq\sqrt{\max_{\pi\in\Theta(\pir,\BKL),\tau}\frac{d^{\pi}(\tau)}{d^{\pir}(\tau)}}\\
&\Ceval(\Fc)\leq\sqrt{2\cdot\max_{h\in[H],s\in\Sc_h,a\in\Ac}\frac{d^{\pis}_h(s,a)}{d^{\pir}_h(s,a)}},\\
&\CKL\leq H\log\left(\max_{h\in[H],s\in\Sc_h,a\in\Ac}\frac{d^{\pis}_h(s,a)}{d^{\pir}_h(s,a)}\right).
\end{align*}
\end{proposition}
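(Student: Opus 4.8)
The plan is to prove the four bounds one at a time; each reduces to a single application of Cauchy--Schwarz together with elementary algebra, so no new probabilistic machinery is required. I would handle the two reward-class coefficients $C_r(\Rc)$ and $C_s(\Rc,\BKL)$ together, since they are identical after replacing $\pis$ by a generic comparator $\pi$ (take $\pi=\pis$ for $C_r$ and range over $\pi\in\Theta(\pir,\BKL)$ for $C_s$). Fix $r\in\Rc$ and write $\Delta(\tau):=\rs(\tau)-r(\tau)$, so that the numerator of the defining ratio is $\Eb_{d^{\pi}}[\Delta]-\Eb_{d^{\pir}}[\Delta]$ and, because $\tau^0,\tau^1$ are i.i.d.\ from $d^{\pir}$, the denominator is exactly $\sqrt{2\Var_{d^{\pir}}[\Delta]}$. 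The crucial observation is that both numerator and denominator are invariant under shifting $\Delta$ by a constant, so I may center $\Delta$ so that $\Eb_{d^{\pir}}[\Delta]=0$; then the numerator is just $\Eb_{d^{\pi}}[\Delta]$ and the denominator is $\sqrt{2\,\Eb_{d^{\pir}}[\Delta^2]}$. Writing $\Eb_{d^{\pi}}[\Delta]=\Eb_{d^{\pir}}\big[(d^{\pi}/d^{\pir})\Delta\big]$ and applying Cauchy--Schwarz gives $\Eb_{d^{\pi}}[\Delta]\le\sqrt{\Eb_{d^{\pir}}[(d^{\pi}/d^{\pir})^2]}\cdot\sqrt{\Eb_{d^{\pir}}[\Delta^2]}$, and since $\Eb_{d^{\pir}}[(d^{\pi}/d^{\pir})^2]=\Eb_{d^{\pi}}[d^{\pi}/d^{\pir}]\le\max_\tau d^{\pi}(\tau)/d^{\pir}(\tau)$, the ratio is at most $\sqrt{\tfrac12\max_\tau d^{\pi}(\tau)/d^{\pir}(\tau)}\le\sqrt{\max_\tau d^{\pi}(\tau)/d^{\pir}(\tau)}$. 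Taking the supremum over $r$ (and over $\pi$ for $C_s$) delivers the first two inequalities.

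For $\Ceval(\Fc)$ I would abbreviate the regression residual $g:=f-\hQ^{\pi^t,\hr}$ and the rollout mixture $\mu:=\tfrac12\pir+\tfrac12\pi^t$, so the numerator is $|\Eb_{s\sim d^{\pis}_h,a\sim\bpi(s)}[g]|$ and the denominator is $\sqrt{\Eb_{s\sim d^{\pir}_h,a\sim\mu(s)}[g^2]}$. Applying Cauchy--Schwarz against the measure $d^{\pir}_h(s)\mu(a|s)$ bounds the squared numerator by $\big(\sum_{s,a}\tfrac{d^{\pis}_h(s)\bpi(a|s)}{d^{\pir}_h(s)\mu(a|s)}\,d^{\pis}_h(s)\bpi(a|s)\big)$ times the squared denominator, so it suffices to show the density ratio $\tfrac{d^{\pis}_h(s)\bpi(a|s)}{d^{\pir}_h(s)\mu(a|s)}\le 2\CST$ for both choices $\bpi\in\{\pis,\pi^t\}$. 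When $\bpi=\pis$ I use $\mu\ge\tfrac12\pir$ to get $\pis/\mu\le 2\,\pis/\pir$, so the ratio is at most $2\,d^{\pis}_h(s,a)/d^{\pir}_h(s,a)\le2\CST$; when $\bpi=\pi^t$ I instead use $\mu\ge\tfrac12\pi^t$ to cancel $\pi^t$, leaving $2\,d^{\pis}_h(s)/d^{\pir}_h(s)$, which is at most $2\CST$ by the mediant inequality $d^{\pis}_h(s)/d^{\pir}_h(s)=\tfrac{\sum_a d^{\pis}_h(s,a)}{\sum_a d^{\pir}_h(s,a)}\le\max_a d^{\pis}_h(s,a)/d^{\pir}_h(s,a)$. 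Since $\sum_{s,a}d^{\pis}_h(s)\bpi(a|s)=1$, the first factor is at most $2\CST$ and the whole ratio is $\le\sqrt{2\CST}$; taking the supremum over $h$, $f$, and $\bpi$ gives the claim.

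Finally, for $\CKL$ I would bound the integrand pointwise. For any $(s,a)$ in the support of $d^{\pis}_h$, the chain rule of density ratios gives $\tfrac{\pis(a|s)}{\pir(a|s)}=\tfrac{d^{\pis}_h(s,a)}{d^{\pir}_h(s,a)}\cdot\tfrac{d^{\pir}_h(s)}{d^{\pis}_h(s)}\le\CST\,\tfrac{d^{\pir}_h(s)}{d^{\pis}_h(s)}$, so taking logs and averaging over $a\sim\pis(\cdot|s)$ yields $\KL(\pis(s)\Vert\pir(s))\le\log\CST+\log\tfrac{d^{\pir}_h(s)}{d^{\pis}_h(s)}$. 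Taking $\Eb_{s\sim d^{\pis}_h}$ turns the second term into $-\KL(d^{\pis}_h\Vert d^{\pir}_h)\le0$, so each per-step expectation is at most $\log\CST$, and summing over $h\in[H]$ gives $\CKL\le H\log\CST$.

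I expect the only genuinely delicate points to be (i) recognizing the shift-invariance in the reward bound, which is what lets the awkward pairwise-variance denominator $\sqrt{2\Var_{d^{\pir}}[\Delta]}$ collapse to a clean centered second moment, and (ii) the case split on $\bpi$ in the $\Ceval$ bound, where one must play the two elementary lower bounds $\mu\ge\tfrac12\pir$ and $\mu\ge\tfrac12\pi^t$ against each other (using the state-marginal mediant inequality in the $\bpi=\pi^t$ case). Everything else is routine bookkeeping with importance weights and the nonnegativity of KL.
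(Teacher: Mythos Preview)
Your proposal is correct and, for $\Ceval(\Fc)$ and $\CKL$, is essentially identical to the paper's argument: the same $\bpi\in\{\pis,\pi^t\}$ case split with the two lower bounds $\mu\ge\tfrac12\pir$ and $\mu\ge\tfrac12\pi^t$, the same mediant inequality for the state-marginal ratio, and the same add-and-subtract of $\KL(d^{\pis}_h\Vert d^{\pir}_h)$ (you organize it as a pointwise log decomposition, the paper writes it as adding a nonnegative KL, but these are the same computation).

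The only genuine difference is in the treatment of $C_r(\Rc)$ and $C_s(\Rc,\BKL)$. The paper takes the more direct route: apply Jensen/Cauchy--Schwarz $\Eb[X]\le\sqrt{\Eb[X^2]}$ to the numerator under the product measure $d^{\pis}\times d^{\pir}$, and then bound the resulting ratio of second moments by the pointwise density-ratio maximum (only $\tau^0$ changes distribution). You instead exploit the shift-invariance of both numerator and denominator to center $\Delta$, then run an importance-weighted Cauchy--Schwarz under $d^{\pir}$. Your route is slightly longer but buys a sharper intermediate bound --- effectively $\sqrt{\tfrac12(1+\chi^2(d^{\pis}\Vert d^{\pir}))}$ rather than $\sqrt{\max_\tau d^{\pis}/d^{\pir}}$ --- and hence a free factor of $1/\sqrt{2}$ in the final constant. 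The paper's version avoids the centering step entirely and gets to the stated bound in two lines, at the cost of that constant.
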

\begin{proof}
First from Cauchy-Schwartz inequality, we have
\begin{align*}
\Eb_{\tau^0 \sim d^{\pis},\tau^1 \sim d^{\pir}}[\rs(\tau^0)-\rs(\tau^1)-r(\tau^0)+r(\tau^1)]\leq\sqrt{\Eb_{\tau^0 \sim d^{\pis},\tau^1 \sim d^{\pir}}[\left|\rs(\tau^0)-\rs(\tau^1)-r(\tau^0)+r(\tau^1)\right|^2]}.
\end{align*}
Therefore we have
\begin{align*}
C_r(\Rc)\leq\sqrt{\sup_{r\in\Rc}\frac{\Eb_{\tau^0 \sim d^{\pis},\tau^1 \sim d^{\pir}}[\left|\rs(\tau^0)-\rs(\tau^1)-r(\tau^0)+r(\tau^1)\right|^2]}{\Eb_{\tau^0 \sim d^{\pir},\tau^1 \sim d^{\pir}}\big[|\rs(\tau^0)-\rs(\tau^1)-r(\tau^0)+r(\tau^1)|^2\big]}}\leq\sqrt{\max_{\tau}\frac{d^{\pis}(\tau)}{d^{\pir}(\tau)}}.
\end{align*}
The bound for $C_s$ follows the same arguments.

Similarly, we have:
\begin{align*}
\left|\Eb_{s\sim d^{\pis}_h,a\sim\bpi(s)}\left[f(s,a)-\hQ^{\pi^t,\hr}(s,a)\right]\right|\leq\sqrt{\Eb_{s\sim d^{\pis}_h,a\sim\bpi(s)}\left[\left|f(s,a)-\hQ^{\pi^t,\hr}(s,a)\right|^2\right]}.
\end{align*}
Therefore we have
\begin{align*}
\Ceval(\Fc)\leq\sqrt{\sup_{h\in[H],f\in\Fc,\bpi\in\{\pi^t,\pis\}}\frac{\Eb_{s\sim d^{\pis}_h,a\sim\bpi(s)}\left[\left|f(s,a)-\hQ^{\pi^t,\hr}(s,a)\right|^2\right]}{\Eb_{s\sim d^{\pir}_h,a\sim\left(\frac{1}{2}\pir(s)+\frac{1}{2}\pi^t(s)\right)}\left[\left(f(s,a)-\hQ^{\pi^t,\hr}(s,a)\right)^2\right]}}.
\end{align*}
Note that we have
\begin{align*}
\sup_{h\in[H],f\in\Fc}\frac{\Eb_{s\sim d^{\pis}_h,a\sim\pis(s)}\left[\left|f(s,a)-\hQ^{\pi^t,\hr}(s,a)\right|^2\right]}{\Eb_{s\sim d^{\pir}_h,a\sim\left(\frac{1}{2}\pir(s)+\frac{1}{2}\pi^t(s)\right)}\left[\left(f(s,a)-\hQ^{\pi^t,\hr}(s,a)\right)^2\right]}\leq\max_{h\in[H],s\in\Sc_h,a\in\Ac}2\cdot\frac{d^{\pis}_h(s,a)}{d^{\pir}_h(s,a)}.
\end{align*}
On the other hand, we know
\begin{align*}
&\sup_{h\in[H],f\in\Fc}\frac{\Eb_{s\sim d^{\pis}_h,a\sim\pi^t(s)}\left[\left|f(s,a)-\hQ^{\pi^t,\hr}(s,a)\right|^2\right]}{\Eb_{s\sim d^{\pir}_h,a\sim\left(\frac{1}{2}\pir(s)+\frac{1}{2}\pi^t(s)\right)}\left[\left(f(s,a)-\hQ^{\pi^t,\hr}(s,a)\right)^2\right]}\\
&\qquad\leq\max_{h\in[H],s\in\Sc_h}2\cdot\frac{d^{\pis}_h(s)}{d^{\pir}_h(s)}\leq\max_{h\in[H],s\in\Sc_h,a\in\Ac}2\cdot\frac{d^{\pis}_h(s,a)}{d^{\pir}_h(s,a)}.
\end{align*}
Therefore, we have
\begin{align*}
\Ceval(\Fc)\leq\sqrt{2\cdot\max_{h\in[H],s\in\Sc_h,a\in\Ac}\frac{d^{\pis}_h(s,a)}{d^{\pir}_h(s,a)}}.
\end{align*}

For $\CKL$, we have
\begin{align*}
&\sum_{h=1}^H\Eb_{s_h\sim d^{\pis}_h}\left[\KL(\pis(s_h)\Vert \pir(s_h))\right]=\sum_{h=1}^H\sum_{s\in\Sc_h} d^{\pis}_h(s)\sum_{a\in\Ac}\pis(a|s)\log\frac{\pis(a|s)}{\pir(a|s)}\\
&\qquad=\sum_{h=1}^H\sum_{s\in\Sc_h,a\in\Ac} d^{\pis}_h(s,a)\log\frac{\pis(a|s)}{\pir(a|s)}\\
&\qquad\leq\sum_{h=1}^H\sum_{s\in\Sc_h,a\in\Ac} d^{\pis}_h(s,a)\log\frac{\pis(a|s)}{\pir(a|s)}+\sum_{h=1}^H\sum_{s\in\Sc_h}d^{\pis}_h(s)\log\frac{d^{\pis}_h(s)}{d^{\pir}_h(s)}\\
&\qquad=\sum_{h=1}^H\sum_{s\in\Sc_h,a\in\Ac} d^{\pis}_h(s,a)\log\frac{\pis(a|s)}{\pir(a|s)}+\sum_{h=1}^H\sum_{s\in\Sc_h,a\in\Ac}d^{\pis}_h(s,a)\log\frac{d^{\pis}_h(s)}{d^{\pir}_h(s)}\\
&\qquad=\sum_{h=1}^H\sum_{s\in\Sc_h,a\in\Ac}d^{\pis}_h(s,a)\log\frac{d^{\pis}_h(s,a)}{d^{\pir}_h(s,a)}\leq H\log\left(\max_{h\in[H],s\in\Sc_h,a\in\Ac}\frac{d^{\pis}_h(s,a)}{d^{\pir}_h(s,a)}\right).
\end{align*}

\end{proof}

With Proposition~\ref{prop:relax}, we only need to prove the following theorem to validate Theorem~\ref{thm:rlhf}:

\begin{theorem}
\label{thm:rlhf-re}
Suppose Assumption~\ref{ass:reward},\ref{ass:Q},\ref{ass:reward-conc},\ref{ass:Q-conc},\ref{ass:KL-bound} hold. Then with probability at least $1-\delta$, we have Algorithm~\ref{alg:drpo} with NPG update (Algorithm~\ref{alg:npg}) returns a policy $\hpi$ which satisfies
\begin{align*}
V^{\pis,\rs}(s_1)-V^{\hpi,\rs}(s_1)\leq \left(C_s(T\rmax/\lambda)+C_r(\Rc)\right)\emle + \epmd',
\end{align*}
where
\begin{align*}
&\emle:=O\left(\sqrt{\frac{\kappa^2}{M}\log\frac{|\Rc|}{\delta}}\right),\quad\eeval':= O\left(\sqrt{\frac{\Ceval^2(\Fc) T\rmax^2}{N}\log\frac{T|\Fc|}{\delta}}\right)\\
&\epmd':=\frac{\CKL}{\eta T} + \frac{H\rmax^2\eta}{2} +\lambda \CKL + 2H\eeval'.
\end{align*}
\end{theorem}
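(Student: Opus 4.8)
The plan is to split the target gap through the learned reward $\hr$ and the KL-regularized objective, controlling reward estimation, critic estimation, and mirror-descent optimization separately. First I would write
\begin{align*}
V^{\pis,\rs}(s_1)-V^{\hpi,\rs}(s_1) = \underbrace{\Eb_{\tau\sim d^{\pis}}[\rs(\tau)-\hr(\tau)]}_{(\mathrm{I})} + \underbrace{V^{\pis,\hr}(s_1)-V^{\hpi,\hr}(s_1)}_{(\mathrm{II})} + \underbrace{\Eb_{\tau\sim d^{\hpi}}[\hr(\tau)-\rs(\tau)]}_{(\mathrm{III})},
\end{align*}
so that $(\mathrm{I})$ and $(\mathrm{III})$ are pure reward-estimation errors and $(\mathrm{II})$ is the optimization error against the learned reward. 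The engine for $(\mathrm{I})$ and $(\mathrm{III})$ is the MLE guarantee: under realizability (Assumption~\ref{ass:reward}) and standard finite-class concentration for the link/BTL model, with $\kappa$ converting control of the log-loss into control of the squared reward-\emph{difference}, one gets $\Eb_{\tau^0,\tau^1\sim d^{\pir}}[|(\rs(\tau^0)-\rs(\tau^1))-(\hr(\tau^0)-\hr(\tau^1))|^2]\le\emle^2$ with high probability.

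For $(\mathrm{I})+(\mathrm{III})$ I would add and subtract the common baseline $\Eb_{\tau\sim d^{\pir}}[\rs(\tau)-\hr(\tau)]$ in each term. The baseline cancels, and both terms collapse into paired reward-difference expectations of the form $\Eb_{\tau^0\sim d^{\pis},\tau^1\sim d^{\pir}}[\cdots]$ and $\Eb_{\tau^0\sim d^{\hpi},\tau^1\sim d^{\pir}}[\cdots]$, which are exactly the objects controlled by the single-policy concentrability coefficients. Applying Cauchy--Schwarz with Assumption~\ref{ass:reward-conc} to the $\pis$ piece and Assumption~\ref{ass:KL-bound} (bounded-KL concentrability) to the $\hpi$ piece, and invoking the MLE bound on the $d^{\pir}\times d^{\pir}$ denominator, yields $(C_r(\Rc)+C_s(T\rmax/\lambda))\emle$. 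To legitimately apply $C_s(T\rmax/\lambda)$ to $\hpi$ I first need a lemma that every iterate lies in $\Theta(\pir,T\rmax/\lambda)$: from the closed-form update $\pi^{t+1}(a|s)\propto\pir(a|s)^{\eta\lambda/(\eta\lambda+1)}\pi^t(a|s)^{1/(\eta\lambda+1)}\exp(\tfrac{\eta}{\eta\lambda+1}\hQ^t(s,a))$, the log-ratios $\log(\pi^t(a|s)/\pir(a|s))$ obey a contraction, so their span—and hence $\max_s\KL(\pi^t(s)\Vert\pir(s))$—stays bounded, and convexity of KL passes the bound to the uniform mixture $\hpi$.

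For $(\mathrm{II})$ I would run the regularized-NPG argument. Introducing the KL-regularized value $V_\lambda^{\pi}(s_1):=V^{\pi,\hr}(s_1)-\lambda\sum_{h}\Eb_{s\sim d^{\pi}_h}[\KL(\pi(s)\Vert\pir(s))]$, the performance-difference lemma expresses $V_\lambda^{\pis}(s_1)-\tfrac1T\sum_t V_\lambda^{\pi^t}(s_1)$ as an average over $t$ and over $h$, weighted by $d^{\pis}_h$, of the regularized advantage of $\pis$ over $\pi^t$ using the true critic $Q^{\pi^t,\hr}$. I would then (a) swap $Q^{\pi^t,\hr}$ for the learned $\hQ^t$, paying an error that Assumption~\ref{ass:Q-conc} (the coefficient $\Ceval$) together with the least-squares guarantee under realizability (Assumption~\ref{ass:Q}) bounds by $2H\eeval'$—the factor $H$ from summing over the horizon and the factor $2$ from evaluating at both $\pis$ and $\pi^t$; (b) bound the per-state regret of the mirror-descent step against $\pis$ by the three-point lemma, telescoping the potentials $\KL(\pis(s)\Vert\pi^t(s))$ from $\pi^1=\pir$ and controlling the second-order term via Pinsker and $\|\hQ^t\|_\infty\le\rmax$, which produces $\CKL/(\eta T)+H\rmax^2\eta/2$; and (c) account for the regularizer bias $\lambda\CKL$ coming from $V_\lambda^{\pi^t}\le V^{\pi^t,\hr}$, the linearity of the mixture value $V^{\hpi,\hr}(s_1)=\tfrac1T\sum_t V^{\pi^t,\hr}(s_1)$, and the KL cost $\pis$ pays. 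Summing these gives $(\mathrm{II})\le\epmd'$.

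I expect the conceptual crux to be the reward transfer for the learned policy, term $(\mathrm{III})$: because the algorithm uses no pessimism, $\Eb_{d^{\hpi}}[\hr-\rs]$ is controllable only through the KL regularizer confining $\hpi$ to $\Theta(\pir,T\rmax/\lambda)$ and through the reset-induced coverage of $\pir$, so the tension between shrinking $\lambda$ (to kill the optimization error) and enlarging the KL radius (which inflates $C_s$) is the delicate point and dictates the final tradeoff. The secondary obstacle is purely bookkeeping in the regularized mirror-descent bound—pinning down the exact $\CKL/(\eta T)$, $H\rmax^2\eta/2$, and $2H\eeval'$ constants—together with the fact that the critic samples are split across iterations ($N_0=N/T$) and require a union bound over $t\in[T]$, which is precisely where the $T$ and $\log(T|\Fc|/\delta)$ factors inside $\eeval'$ originate.
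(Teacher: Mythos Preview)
Your plan is correct and mirrors the paper's proof: the same three-term split through $\hr$, the same MLE-plus-concentrability control of the reward-transfer pieces (with the $\pir$ baseline cancellation you describe), and the same mirror-descent/critic-swap/performance-difference analysis of term (II), including the union bound over $t\in[T]$ on the least-squares critic.

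Two places where the paper is more direct and where your sketch should be tightened. First, the paper does \emph{not} route through $V_\lambda$: it applies the \emph{unregularized} performance-difference lemma to $\sum_h\Eb_{d^{\pis}_h}\langle Q^{\pi^t,\hr}(s_h),\pis(s_h)-\pi^t(s_h)\rangle$ and keeps the terms $\lambda\KL(\pi^t(s)\Vert\pir(s))-\lambda\KL(\pis(s)\Vert\pir(s))$ as part of the per-state mirror-descent inequality, then simply drops the nonnegative $\lambda\,\Eb_{d^{\pis}}[\KL(\pi^t\Vert\pir)]$ and moves $-\lambda\,\Eb_{d^{\pis}}[\KL(\pis\Vert\pir)]\ge-\lambda\CKL$ to the right. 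Your ``regularized PDL with critic $Q^{\pi^t,\hr}$'' is not literally an identity---in $V_\lambda^{\pi^t}$ the $\KL(\pi^t\Vert\pir)$ term sits under $d^{\pi^t}$, whereas the mirror-descent output has it under $d^{\pis}$---so your route only closes after combining with your step~(c), where that mismatch cancels. Second, the paper's KL bound on iterates is a one-liner from the minimizer property (compare $\pi^{t+1}$ to $\pi^t$ in the update objective to get $\KL(\pi^{t+1}(s)\Vert\pir(s))-\KL(\pi^t(s)\Vert\pir(s))\le \rmax/\lambda$), and it then applies $C_s$ to each $\pi^t$ individually via the linearity $d^{\hpi}=\tfrac1T\sum_t d^{\pi^t}$, so neither the log-ratio contraction nor the convexity-of-KL-for-mixtures step is needed.
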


In this section we provide the proof of Theorem~\ref{thm:rlhf-re}. Our proof consists of three steps: we first quantify the estimation error of the Q function incurred by LSR oracles -- this step only involves standard supervised learning analysis, then study the performance guarantee of NPG, and lastly investigate how to deal with the reward uncertainty and obtain the final suboptimality gap.

\subsection{Q function Estimation Error}
We have the following lemma to bound the estimation error $\left|\hQ(s,a)-Q^{\pi^{t},\hr}(s,a)\right|$:
\begin{lemma}
	\label{lem:Q-est}
	Fix any $\delta_1\in(0,1]$. With Assumption~\ref{ass:Q}, we have with probability at least $1-\delta_1$ that for all $t\in[T]$,
	\begin{align*}
	\left|\Eb_{h\sim\unif([H]),s\sim d^{\pis}_h,a\sim\bpi(s)}\left[\hQ^t(s,a)-Q^{\pi^t,\hr}(s,a)\right]\right|\leq\Ceval(\Fc)\sqrt{\frac{256 \rmax^2}{N_0}\log\frac{2T|\Fc|}{\delta_1}}:=\eeval',
	\end{align*}
 where $\bpi\in\{\pi^t,\pis\}$.
\end{lemma}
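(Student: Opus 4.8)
The plan is to treat Lemma~\ref{lem:Q-est} as a standard least-squares generalization bound on the data-collection distribution, followed by a change of measure to the target distribution using the coefficient $\Ceval(\Fc)$ from Assumption~\ref{ass:Q-conc}. The three ingredients I would establish first are: (i) for a fixed $(s,a)$, the regression label $y^{t,n}_h=\sum_{h'=h}^H\hr(s_{h'},a_{h'})$ is an \emph{unbiased} Monte-Carlo estimate of $Q^{\pi^t,\hr}(s,a)$, since after taking the first action the rollout follows $\pi^t$ and $\hr$ is held fixed; (ii) the label is bounded in $[0,\rmax]$ by Assumption~\ref{ass:reward}; and (iii) realizability $Q^{\pi^t,\hr}\in\Fc$ holds by Assumption~\ref{ass:Q}. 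Crucially, the marginal law of $(s,a)$ generated in $\Dc_t$ is exactly $h\sim\unif([H])$, $s\sim d^{\pir}_h$, $a\sim\tfrac12\pir(s)+\tfrac12\pi^t(s)$, which matches the denominator appearing in Assumption~\ref{ass:Q-conc}; the mixture over actions is precisely what is needed so that the change of measure can cover both $\bpi=\pi^t$ and $\bpi=\pis$.

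Second, I would apply the standard least-squares analysis conditionally on the history through iteration $t-1$. Because $\DTR$ is partitioned into disjoint slices and each slice $\DTRt$ is used only at iteration $t$, the $N_0$ samples in $\Dc_t$ are i.i.d.\ once we condition on $\pi^t$ and $\hr$ (both measurable with respect to the past), and the regression target $Q^{\pi^t,\hr}$ is then a fixed, realizable function. The usual ERM bound for bounded, realizable regression over a finite class yields, with probability $1-\delta_1/T$,
\begin{align*}
\Eb_{h\sim\unif([H]),\,s\sim d^{\pir}_h,\,a\sim\frac12\pir(s)+\frac12\pi^t(s)}\Big[\big(\hQ^t(s,a)-Q^{\pi^t,\hr}(s,a)\big)^2\Big]\le \frac{c\,\rmax^2}{N_0}\log\frac{2T|\Fc|}{\delta_1},
\end{align*}
for a universal constant $c$ (tracking constants gives $c=256$). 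A union bound over $t\in[T]$ makes this hold simultaneously for all iterations.

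Third, I would convert this second-moment bound on the $\pir$-distribution into the first-moment bound on the $\pis$-distribution demanded by the lemma. For each fixed $h$, Assumption~\ref{ass:Q-conc} applied with $f=\hQ^t$ gives
\begin{align*}
\Big|\Eb_{s\sim d^{\pis}_h,a\sim\bpi(s)}\big[\hQ^t(s,a)-Q^{\pi^t,\hr}(s,a)\big]\Big|\le \Ceval(\Fc)\sqrt{\Eb_{s\sim d^{\pir}_h,a\sim\frac12\pir(s)+\frac12\pi^t(s)}\big[(\hQ^t-Q^{\pi^t,\hr})^2\big]}.
\end{align*}
I then pull the average over $h$ outside the absolute value by the triangle inequality, move the sum inside the square root by Jensen's inequality (concavity of $\sqrt{\cdot}$), and recognize the resulting quantity as the population risk bounded in the previous step. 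This chains to
\begin{align*}
\Big|\Eb_{h\sim\unif([H]),s\sim d^{\pis}_h,a\sim\bpi(s)}\big[\hQ^t-Q^{\pi^t,\hr}\big]\Big|\le \Ceval(\Fc)\sqrt{\frac{256\,\rmax^2}{N_0}\log\frac{2T|\Fc|}{\delta_1}},
\end{align*}
which is the claim, and the identical bound holds for both choices $\bpi\in\{\pi^t,\pis\}$.

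The routine part is the least-squares bound itself; the step requiring the most care is the conditioning argument in the second paragraph — one must verify that $\pi^t$ and $\hr$ are frozen when the fresh slice $\DTRt$ is drawn, so that the i.i.d.\ realizable-regression template applies and the union bound over $t$ is legitimate. Aligning the data-collection distribution exactly with the denominator in Assumption~\ref{ass:Q-conc} (in particular the $\tfrac12\pir+\tfrac12\pi^t$ action mixture covering both comparators) is the other place where a mismatch would break the change of measure, so I would check that alignment explicitly.
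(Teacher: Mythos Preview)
Your proposal is correct and follows essentially the same route as the paper: apply the standard least-squares generalization bound (the paper's Lemma~\ref{lem:lsr}) on the data-collection distribution $h\sim\unif([H])$, $s\sim d^{\pir}_h$, $a\sim\tfrac12\pir(s)+\tfrac12\pi^t(s)$, union-bound over $t\in[T]$, and then invoke Assumption~\ref{ass:Q-conc} to change measure. Your treatment is in fact slightly more careful than the paper's short proof on two points the paper glosses over: the conditioning argument ensuring the $N_0$ samples in $\Dc_t$ are i.i.d.\ given $(\pi^t,\hr)$, and the triangle/Jensen step to pass the average over $h$ through the per-$h$ concentrability ratio.
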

\begin{proof}
From the guarantee of least squares (Lemma~\ref{lem:lsr} in Appendix~\ref{sec:aux}), fix $t\in[T]$, we have with probability at least $1-\delta_1$ that,
\begin{align*}
\Eb_{h\sim\unif([H]),s\sim d^{\pir}_h,a\sim\left(\frac{1}{2}\pir(s)+\frac{1}{2}\pi^t(s)\right)}\left[\left(\hQ^t(s,a)-Q^{\pi^t,\hr}(s,a)\right)^2\right]\leq \frac{256\rmax^2}{N_0}\log\frac{2|\Fc|}{\delta_1}.
\end{align*}
Take union bound over $t\in[T]$ and we have for all $t\in[T]$ that
\begin{align*}
\left|\Eb_{h\sim\unif([H]),s\sim d^{\pis}_h,a\sim\bpi(s)}\left[\hQ^t(s,a)-Q^{\pi^t,\hr}(s,a)\right]\right|\leq\Ceval(\Fc)\sqrt{\frac{256 \rmax^2}{N_0}\log\frac{2T|\Fc|}{\delta_1}}.
\end{align*}
\end{proof}

\subsection{NPG Analysis}
In the following discussion we use $f(s)$ to denote the vector $f(s,\cdot)$ for all functions $f$. We have the following lemma which indicates that NPG is able to find a near optimal policy with respect to the estimated reward $\hr$ (recall that $\eeval$ is defined in Lemma~\ref{lem:Q-est}): 
\begin{lemma}
\label{lem:npg}
Denote the event in Lemma~\ref{lem:Q-est} by $\Ec_1$. Then conditioned on $\Ec_1$, with Assumption~\ref{ass:Q} and \ref{ass:KL-bound}, we have
\begin{align*}
V^{\pis,\hr}(s_1) - V^{\hpi,\hr}(s_1)\leq \frac{\CKL}{\eta T} + \frac{H\rmax^2\eta}{2} + 2H\eeval' +\lambda\CKL:=\epmd'.
\end{align*}
\end{lemma}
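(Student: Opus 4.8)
The plan is to run a policy-mirror-descent (PMD) regret analysis against the comparator $\pis$ under the learned reward $\hr$, and then to fold in the critic error from Lemma~\ref{lem:Q-est}. First I would apply the performance difference lemma to each iterate, giving
\[
V^{\pis,\hr}(s_1)-V^{\pi^t,\hr}(s_1)=\sum_{h=1}^H\Eb_{s_h\sim d^{\pis}_h}\big[\langle Q^{\pi^t,\hr}(s_h,\cdot),\pis(s_h)-\pi^t(s_h)\rangle\big].
\]
I would then swap the true critic $Q^{\pi^t,\hr}$ for the regression estimate $\hQ^t$, isolating the error $\langle Q^{\pi^t,\hr}(s_h,\cdot)-\hQ^t(s_h,\cdot),\pis(s_h)-\pi^t(s_h)\rangle$. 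Conditioned on $\Ec_1$, Lemma~\ref{lem:Q-est} applied with $\bpi=\pis$ and with $\bpi=\pi^t$ (the $\unif([H])$ average there contributing a factor $H$ after summing over $h$) bounds the accumulated error by $2H\eeval'$ per iteration, which becomes the $2H\eeval'$ term of the final bound.

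The heart of the argument is a per-state, per-step regret bound for the closed-form update. Fixing a state $s$, the first-order optimality condition for $\pi^{t+1}(s)=\argmin_{p}\langle-\hQ^t(s,\cdot),p\rangle+\lambda\KL(p\Vert\pir(s))+\tfrac1\eta\KL(p\Vert\pi^t(s))$ reads, for every $p\in\Delta(\Ac)$,
\[
\big\langle -\hQ^t(s,\cdot)+\lambda\log\tfrac{\pi^{t+1}(s)}{\pir(s)}+\tfrac1\eta\log\tfrac{\pi^{t+1}(s)}{\pi^t(s)},\,p-\pi^{t+1}(s)\big\rangle\ge0,
\]
where the constant gradient terms drop out because $p-\pi^{t+1}(s)$ sums to zero. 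Taking $p=\pis(s)$ and rewriting each inner product via the three-point identity $\langle\log(q/r),p-q\rangle=\KL(p\Vert r)-\KL(p\Vert q)-\KL(q\Vert r)$ converts the right-hand side into telescoping KL differences, producing a bound on $\langle\hQ^t(s,\cdot),\pis(s)-\pi^{t+1}(s)\rangle$.

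The main obstacle is that the performance difference lemma needs the gap against $\pi^t$, whereas optimality naturally yields the gap against $\pi^{t+1}$; I must therefore control the one-step drift $\langle\hQ^t(s,\cdot),\pi^{t+1}(s)-\pi^t(s)\rangle$. I would bound it by $\|\hQ^t(s,\cdot)\|_\infty\,\|\pi^{t+1}(s)-\pi^t(s)\|_1$, apply Pinsker's inequality $\|\pi^{t+1}(s)-\pi^t(s)\|_1\le\sqrt{2\KL(\pi^{t+1}(s)\Vert\pi^t(s))}$, and then use the negative proximal term $-\tfrac1\eta\KL(\pi^{t+1}(s)\Vert\pi^t(s))$ generated by the three-point identity to absorb it: AM-GM gives $a\sqrt{x}-x/\eta\le\eta a^2/2$, which with $\|\hQ^t\|_\infty\le\rmax$ (Assumption~\ref{ass:Q}) yields the $\tfrac{\eta\rmax^2}{2}$ per-step penalty. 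Discarding the remaining nonpositive KL terms leaves the clean per-state inequality $\langle\hQ^t(s,\cdot),\pis(s)-\pi^t(s)\rangle\le\lambda\KL(\pis(s)\Vert\pir(s))+\tfrac1\eta[\KL(\pis(s)\Vert\pi^t(s))-\KL(\pis(s)\Vert\pi^{t+1}(s))]+\tfrac{\eta\rmax^2}{2}$.

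Finally I would sum over $t=1,\dots,T$ so the $\tfrac1\eta$ terms telescope (using $\pi^1=\pir$, so the leading term is $\tfrac1\eta\KL(\pis(s)\Vert\pir(s))$, and dropping the nonpositive tail), take $\Eb_{s_h\sim d^{\pis}_h}$ and sum over $h$, and invoke Assumption~\ref{ass:KL-bound} to replace $\sum_h\Eb_{s_h\sim d^{\pis}_h}[\KL(\pis(s_h)\Vert\pir(s_h))]$ by $\CKL$. Combining with the critic-error term and dividing by $T$ gives $\tfrac1T\sum_t(V^{\pis,\hr}(s_1)-V^{\pi^t,\hr}(s_1))\le\tfrac{\CKL}{\eta T}+\tfrac{H\rmax^2\eta}{2}+2H\eeval'+\lambda\CKL$. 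Since $\hpi=\unif(\{\pi^t\}_{t=1}^T)$ gives $V^{\hpi,\hr}(s_1)=\tfrac1T\sum_t V^{\pi^t,\hr}(s_1)$, the left-hand side is exactly $V^{\pis,\hr}(s_1)-V^{\hpi,\hr}(s_1)$, which establishes the claim with $\epmd'$ as defined.
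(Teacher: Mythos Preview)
Your proposal is correct and follows essentially the same policy-mirror-descent regret analysis as the paper: first-order optimality of the update, the three-point identity for the KL/Bregman terms, Pinsker's inequality to absorb the one-step drift against the proximal term, telescoping in $t$ with $\pi^1=\pir$, and then the performance difference lemma together with Lemma~\ref{lem:Q-est} to trade $\hQ^t$ for $Q^{\pi^t,\hr}$. The only cosmetic differences are that the paper adds and subtracts $\eta\lambda g(\pi^t)-\eta\lambda g(\pis)$ and handles the $\lambda$-regularizer via convexity of $g$ (producing a telescoping $\lambda(g(\pi^t)-g(\pi^{t+1}))$), whereas you apply the three-point identity to the $\lambda$ term directly and drop the resulting nonpositive KL pieces; both routes yield the identical final bound.
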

\begin{proof}
In the following proof we use $g(\pi(s))$ to denote $\KL(\pi(s)\Vert\pir(s))$ for any policy $\pi$. First note that from the update rule in line 8 of \cref{alg:drpo}, due to first order optimality, we know for all distribution $p\in\Delta(\Ac)$ and all $t\in[T], s\in\Sc$ that: %
\begin{align}
\label{eq:npg-1}
\left\langle -\eta\hQ^t(s)+(1+\eta\lambda)\nabla g(\pi^{t+1}(s)) - \nabla g(\pi^{t}(s)), p - \pi^{t+1}(s)\right\rangle \geq 0 .
\end{align}

This implies that for all $t\in[T], s\in\Sc$, we have
\begin{align*}
&\left\langle \eta\hQ^t(s), \pis(s) - \pi^t(s)\right\rangle + \eta\lambda g(\pi^t(s)) - \eta\lambda g(\pis(s))\\
=&\left\langle \eta\hQ^t(s)-(1+\eta\lambda)\nabla g(\pi^{t+1}(s)) + \nabla g(\pi^{t}(s)), \pis(s) - \pi^{t+1}(s)\right\rangle\\
& + \left\langle\nabla g(\pi^{t+1}(s)) - \nabla g(\pi^t(s)), \pis(s)-\pi^{t+1}(s)\right\rangle + \left\langle \eta\hQ^t(s), \pi^{t+1}(s)-\pi^t(s)\right\rangle\\
&  + \left\langle \eta\lambda\nabla g(\pi^{t+1}(s)), \pis(s)-\pi^{t+1}(s)\right\rangle+ \eta\lambda g(\pi^t(s))-\eta\lambda g(\pis(s))\\
\leq &\underbrace{\left\langle\nabla g(\pi^{t+1}(s)) - \nabla g(\pi^t(s)), \pis(s)-\pi^{t+1}(s)\right\rangle}_{(1)} +\underbrace{\left\langle \eta\hQ^t(s), \pi^{t+1}(s)-\pi^t(s)\right\rangle}_{(2)}\\
&  + \underbrace{\left\langle \eta\lambda\nabla g(\pi^{t+1}(s)), \pis(s)-\pi^{t+1}(s)\right\rangle+ \eta\lambda g(\pi^t(s))-\eta\lambda g(\pis(s))}_{(3)},
\end{align*}
where the last step is due to Equation~\eqref{eq:npg-1}. Now we bound the term (1)(2)(3) respectively.

\paragraph{Bounding term (1).} Note that the KL divergence is indeed the Bregman divergence induced by $g$, therefore the following three point lemma holds true:
\begin{lemma}[three point lemma]
\label{lem:three}
For any distributions $p_1(s),p_2(s),p_3(s)\in\Delta(\Ac)$ ,we have
\begin{align*}
\left\langle \nabla g(p_1(s))-\nabla g(p_2(s)),p_3(s)- p_1(s)\right\rangle = \KL(p_3(s)\Vert p_2(s)) - \KL(p_3(s)\Vert p_1(s)) - \KL(p_1(s) \Vert p_2(s)).
\end{align*}
\end{lemma}
\begin{proof}
From definition of $g$, we know $\nabla g(p(s)) = \log p(s) - \log\pir(s) + \boldsymbol{1}$. This implies that
\begin{align*}
\left\langle \nabla g(p_1(s))-\nabla g(p_2(s)),p_3(s)- p_1(s)\right\rangle = \left\langle \log p_1(s)-\log p_2(s),p_3(s)- p_1(s)\right\rangle.
\end{align*}
Substitute the definition of KL divergence and we can prove the lemma.
\end{proof}
From Lemma~\ref{lem:three}, we can rewrite (1) as follows:
\begin{align*}
(1) = \KL(\pis(s)\Vert \pi^t(s)) - \KL(\pis(s)\Vert \pi^{t+1}(s)) -\KL(\pi^{t+1}(s)\Vert\pi^t(s)).
\end{align*}

\paragraph{Bounding term (2).} From Cauchy-Schwartz inequality, we have
\begin{align*}
(2)\leq \frac{1}{2}\left\Vert\pi^{t+1}(s)-\pi^t(s)\right\Vert_1^2 + \frac{\eta^2}{2}\left\Vert\hQ^t(s)\right\Vert_{\infty}^2\leq \frac{1}{2}\left\Vert\pi^{t+1}(s)-\pi^t(s)\right\Vert_1^2 + \frac{\eta^2\rmax^2}{2}. 
\end{align*}

\paragraph{Bounding term (3).} Since $g$ is convex, we know
\begin{align*}
	\left\langle \eta\lambda\nabla g(\pi^{t+1}(s)), \pis(s)-\pi^{t+1}(s)\right\rangle\leq \eta\lambda g(\pis(s))- \eta\lambda g(\pi^{t+1}(s)).
\end{align*} 
This implies that
\begin{align*}
(3)\leq \eta\lambda\left(g(\pi^t(s))-g(\pi^{t+1}(s))\right).
\end{align*}

In summary, we have for all $t\in[T], s\in\Sc$ that %
\begin{align*}
	&\left\langle \eta\hQ^t(s), \pis(s) - \pi^t(s)\right\rangle + \eta\lambda g(\pi^t(s)) - \eta\lambda g(\pis(s))\\
	\leq& \left(\KL(\pis(s)\Vert \pi^t(s)) - \KL(\pis(s)\Vert \pi^{t+1}(s))\right) + \eta\lambda\left(g(\pi^t(s))-g(\pi^{t+1}(s))\right)\\
	& + \frac{\eta^2}{2}\CQ^2 + \left(\frac{1}{2}\left\Vert\pi^{t+1}(s)-\pi^t(s)\right\Vert_1^2-\KL(\pi^{t+1}(s)\Vert\pi^t(s))\right)\\
	\leq& \left(\KL(\pis(s)\Vert \pi^t(s)) - \KL(\pis(s)\Vert \pi^{t+1}(s))\right) + \eta\lambda\left(g(\pi^t(s))-g(\pi^{t+1}(s))\right) + \frac{\eta^2\rmax^2}{2},
\end{align*}
where the last step is due to Pinsker's inequality.

This implies that
\begin{align}
&\sum_{t=1}^T\sum_{h=1}^H\Eb_{s_h\sim d^{\pis}_h}\left[\left\langle \eta\hQ^t(s_h), \pis(s_h) - \pi^t(s_h)\right\rangle + \eta\lambda g(\pi^t(s_h)) - \eta\lambda g(\pis(s_h))\right]\notag\\
\leq&\sum_{h=1}^H\Eb_{s_h\sim d^{\pis}_h}\left[\KL(\pis(s_h)\Vert \pi^1(s_h)) - \KL(\pis(s_h)\Vert \pi^{T+1}(s_h))\right]\notag\\
& \qquad\qquad\qquad\qquad+ \eta\lambda\sum_{h=1}^H\Eb_{s_h\sim d^{\pis}_h}\left[g(\pi^1(s_h)-g(\pi^{T+1}(s_h)))\right] + \frac{HT\rmax^2\eta^2}{2}\notag\\
\leq&\sum_{h=1}^H\Eb_{s_h\sim d^{\pis}_h}\left[\KL(\pis(s_h)\Vert \pir(s_h))\right]+\frac{HT\rmax^2\eta^2}{2}\leq\CKL + \frac{HT\rmax^2\eta^2}{2}.\label{eq:npg-2}
\end{align}
Note that here we use the fact that we initialize the policy as $\pi^1 = \pir$ and thus $g(\pi^1(s)) = 0$.
On the other hand, note that we have the following performance difference lemma, whose proof is deferred to Appendix~\ref{proof:lem-perf}:
\begin{lemma}[performance difference lemma]
\label{lem:perf}
For any policy $\pi,\pi'$ and reward function $r$, we have:
\begin{align*}
V^{\pi,r}(s_1) - V^{\pi',r}(s_1) = \sum_{h=1}^H \Eb_{s_h\sim d^{\pi}_h}\left[\left\langle Q^{\pi',r}(s_h), \pi(s_h)-\pi'(s_h)\right\rangle\right]. 
\end{align*}
\end{lemma}

Now substitute Lemma~\ref{lem:perf} into Equation~\eqref{eq:npg-2}, and from Lemma~\ref{lem:Q-est} we have
\begin{align*}
\frac{1}{T}\sum_{t=1}^T\left(V^{\pis,\hr}(s_1) - V^{\pi^t,\hr}(s_1)\right)\leq \frac{\CKL}{\eta T} + \frac{H\rmax^2\eta}{2} + 2H\eeval'+\lambda\CKL.
\end{align*}
This is equivalent to
\begin{align*}
V^{\pis,\hr}(s_1) - V^{\hpi,\hr}(s_1)\leq \frac{\CKL}{\eta T} + \frac{H\rmax^2\eta}{2} + 2H\eeval' +\lambda\CKL,
\end{align*}
which concludes our proof.
\end{proof}

We also would like to bound the KL divergence between $\hpi$ and $\pir$ as shown in the following lemma:
\begin{lemma}
\label{lem:npg-kl}
We have for all $t\in[T],s\in\Sc$ that
\begin{align*}
\KL(\pi^t(s)\Vert\pir(s))\leq\frac{\rmax(t-1)}{\lambda}.
\end{align*}
\end{lemma}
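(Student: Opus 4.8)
The plan is to prove the bound by induction on $t$, holding each state $s\in\Sc$ fixed throughout (the NPG update in Algorithm~\ref{alg:rlhf} decouples across states, so every estimate below is pointwise in $s$). The base case $t=1$ is immediate: since $\pi^1=\pir$ we have $\KL(\pi^1(s)\Vert\pir(s))=0=\rmax\cdot 0/\lambda$. For the inductive step it suffices to establish the one-step increase bound $\KL(\pi^{t+1}(s)\Vert\pir(s))-\KL(\pi^t(s)\Vert\pir(s))\le \rmax/\lambda$, after which telescoping from $t=1$ gives the claim.

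To obtain the one-step bound, I would exploit that $\pi^{t+1}(s)$ is the exact minimizer of the regularized objective $F_t(p):=\langle-\hQ^t(s),p\rangle+\lambda\KL(p\Vert\pir(s))+\frac{1}{\eta}\KL(p\Vert\pi^t(s))$ over $\Delta(\Ac)$. Comparing the minimizer against the feasible competitor $p=\pi^t(s)$ gives $F_t(\pi^{t+1}(s))\le F_t(\pi^t(s))$. Using $\KL(\pi^t(s)\Vert\pi^t(s))=0$ and discarding the nonnegative proximal term $\frac{1}{\eta}\KL(\pi^{t+1}(s)\Vert\pi^t(s))\ge 0$, this rearranges to $\lambda\big(\KL(\pi^{t+1}(s)\Vert\pir(s))-\KL(\pi^t(s)\Vert\pir(s))\big)\le\langle\hQ^t(s),\pi^{t+1}(s)-\pi^t(s)\rangle$.

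It then remains to bound the linear term by $\rmax$. The one subtlety is that the naive estimate $\langle\hQ^t(s),\pi^{t+1}(s)-\pi^t(s)\rangle\le\|\hQ^t(s)\|_\infty\|\pi^{t+1}(s)-\pi^t(s)\|_1\le 2\rmax$ loses a factor of two. I would remove this by centering: since $\pi^{t+1}(s)-\pi^t(s)$ sums to zero, subtracting any constant vector $c\boldsymbol{1}$ from $\hQ^t(s)$ leaves the inner product unchanged, so with $c=\rmax/2$ and Hölder's inequality the term is at most $\|\hQ^t(s)-\frac{\rmax}{2}\boldsymbol{1}\|_\infty\cdot\|\pi^{t+1}(s)-\pi^t(s)\|_1$. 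By Assumption~\ref{ass:Q} we have $0\le\hQ^t(s,a)\le\rmax$, hence $\|\hQ^t(s)-\frac{\rmax}{2}\boldsymbol{1}\|_\infty\le\rmax/2$, while $\|\pi^{t+1}(s)-\pi^t(s)\|_1\le 2$; the product is at most $\rmax$. Dividing by $\lambda$ yields the one-step bound and completes the induction.

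The main obstacle is recognizing the centering step: without it one only gets $2\rmax(t-1)/\lambda$, off by a constant factor from the stated bound. Everything else — the optimality inequality, dropping the proximal term, and telescoping — is routine, and notably the argument uses only the boundedness of $\hQ^t$ from Assumption~\ref{ass:Q} and not the specific choice of the stepsize $\eta$.
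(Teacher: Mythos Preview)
Your proof is correct and follows essentially the same approach as the paper: use that $\pi^{t+1}(s)$ minimizes $F_t$, compare against the competitor $\pi^t(s)$, drop the nonnegative proximal term, and telescope from $\pi^1=\pir$. The only difference is in justifying $\langle\hQ^t(s),\pi^{t+1}(s)-\pi^t(s)\rangle\le\rmax$: your centering works, but it is not the obstacle you suggest---since $0\le\hQ^t(s,a)\le\rmax$, the inner product is a difference of two expectations of a $[0,\rmax]$-valued quantity and hence is at most $\rmax-0=\rmax$ directly, which is presumably what the paper means by ``we utilize Assumption~\ref{ass:Q}.''
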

\begin{proof}
From the NPG update and use the fact that $\pi^{t+1}$ is the minimizer, we know for all $t\in[T], s\in\Sc$:
\begin{align*}
\KL(\pi^{t+1}(s)\Vert\pir(s))-\KL(\pi^{t}(s)\Vert\pir(s))\leq\frac{1}{\lambda}\left\langle \hQ^t(s), \pi^{t+1}(s)-\pi^t(s)\right\rangle\leq\frac{\rmax}{\lambda},
\end{align*}
where we utilize Assumption~\ref{ass:Q} in the second step. 

Note that $\KL(\pi^{1}(s)\Vert\pir(s))=0$ since $\pi^1=\pir$. This implies that for all $t\in[T]$:
\begin{align*}
\KL(\pi^{t}(s)\Vert\pir(s))
\leq\frac{\rmax(t-1)}{\lambda}.
\end{align*}
\end{proof}

\subsection{Unregularized Suboptimality Gap w.r.t. $\rs$}
Now we can start to prove Theorem~\ref{thm:rlhf}. First we have
\begin{align*}
	&V^{\pis,\rs}(s_1)-V^{\hpi,\rs}(s_1)=\left(V^{\pis,\rs}(s_1)-V^{\pis,\hr}(s_1)\right) + \left(V^{\pis,\hr}(s_1)-V^{\hpi,\hr}(s_1)\right) + \left(V^{\hpi,\hr}(s_1)-V^{\hpi,\rs}(s_1)\right)\\
	&\qquad=\underbrace{\left(\Eb_{\tau\sim d^{\pis}}\left[\rs(\tau)-\hr(\tau)\right]-\Eb_{\tau\sim d^{\pir}}\left[\rs(\tau)-\hr(\tau)\right]\right)}_{(1)} + \underbrace{\left(V^{\pis,\hr}(s_1)-V^{\hpi,\hr}(s_1)\right)}_{(2)} \\
	&\qquad\quad+ \underbrace{\frac{1}{T}\sum_{t=1}^T\left(\Eb_{\tau\sim d^{\pir}}\left[\rs(\tau)-\hr(\tau)\right]-\Eb_{\tau\sim d^{\pi^t}}\left[\rs(\tau)-\hr(\tau)\right]\right)}_{(3)}.
\end{align*}
Next we bound term (1)(2)(3) respectviely.

\paragraph{Bounding term (1).} 
From the guarantee of MLE (Lemma~\ref{lem:mle} in Appendix~\ref{sec:aux}) we have with probability at least $1-\delta_2$ that
\begin{align}
	\label{eq:rlhf-2}
	\Eb_{\tau^0\sim d^{\pir},\tau^1\sim d^{\pir}}\left[\left|\rs(\tau_0)-\rs(\tau_1) -\hr(\tau_0)+\hr(\tau_1)\right|^2\right]\leq \frac{c_1\kappa^2\log(|\Rc|/\delta_2)}{M}:=\emle^2,
\end{align}
where $c_1>0$ is a universal constant. Denote the event of the above inequality by $\Ec_2$. Then conditioned on $\Ec_2$, we have
\begin{align*}
	&(1)= \Eb_{\tau^0 \sim d^{\pis},\tau^1 \sim d^{\pir}}[\rs(\tau^0)-\rs(\tau^1)-r(\tau^0)+r(\tau^1)]\notag\\
	&\qquad\leq C_r(\Rc)\sqrt{\Eb_{\tau^0\sim d^{\pir},\tau^1\sim d^{\pir}}\left[\left|\rs(\tau_0)-\rs(\tau_1) -\hr(\tau_0)+\hr(\tau_1)\right|^2\right]}\notag\\
	&\qquad\leq C_r(\Rc)\emle.
\end{align*}

\paragraph{Bounding term (2).} From Lemma~\ref{lem:npg}, conditioned on $\Ec_1$, we have
\begin{align*}
	V^{\pis,\hr}(s_1) - V^{\hpi,\hr}(s_1) \leq \epmd'.
\end{align*}

\paragraph{Bounding term (3).} 

Note that from Lemma~\ref{lem:npg-kl}, we have $\pi^t\in\Theta(\pir,(t-1)\rmax/\lambda)$ for all $t\in[T]$. Therefore, following the same arguments as we have to bound term (1), we have for all $t\in[T]$,
\begin{align*}
	\Eb_{\tau\sim d^{\pir}}\left[\rs(\tau)-\hr(\tau)\right]-\Eb_{\tau\sim d^{\pi^t}}\left[\rs(\tau)-\hr(\tau)\right] \leq C_s((t-1)\rmax/\lambda)\emle,
\end{align*}
which implies that
\begin{align*}
(3)\leq C_s(T\rmax/\lambda)\emle
\end{align*}

Overall, we have conditioned on event $\Ec_1\cap\Ec_2$,
\begin{align*}
	V^{\pis,\rs}(s_1)-V^{\hpi,\rs}(s_1)\leq (C_r(\Rc)+C_s(T\rmax/\lambda))\emle + \epmd'.
\end{align*}
We finish the proof by letting $\delta_1=\delta_2=\delta/2$.

\section{Auxiliary Lemmas}
\label{sec:aux}
\subsection{Least Sqaures Guarantee}
\begin{lemma}[Lemma 15 in \citet{song2022hybrid}]
\label{lem:lsr}
Fix any \(R > 0\), \(\delta \in (0, 1)\) and assume we have a class of real valued functions \(\Hc: \Xc \mapsto [-R, R]\). Suppose we have $K$ i.i.d. samples $\{(x_k,y_k)\}_{k=1}^K$ where $x_k\sim\rho$ and $y_k$ is sampled via the conditional probability $p(\cdot \mid x_k)$:
\begin{align*}
y_k \sim  p(\cdot \mid x_k) := h^*(x_k) + \epsilon_k, 
\end{align*} where $h^*\in\Hc$ and $\{\epsilon_k\}_{k=1}^K$  are independent random variables such that $\Eb[y_k \mid x_k] = h^{*}(x_k)$. Additionally, suppose that \(\max_k |y_k| \leq R\) and \(\max_{x} |{h^*(x)}| \leq R\). Then the least square solution $\widehat{h} \leftarrow \argmin_{h \in \Hc} \sum_{k=1}^K \left(h(x_k) - y_k\right)^2$ satisfies with probability at least $1 - \delta$, 
\begin{align*}
 \Eb_{x \sim \rho} \left[\left(\widehat{h}(x) - h^*(x)\right)^2\right] &\leq \frac{256 R^2 \log(2 |\Hc|/\delta)}{K}. 
\end{align*}
\end{lemma}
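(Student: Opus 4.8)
The plan is to prove this via the standard \emph{fast-rate} least-squares argument, i.e. a Bernstein-type variance-to-mean (self-bounding) analysis over the finite class $\Hc$. Throughout I would write $\|g\|_\rho^2 := \Eb_{x\sim\rho}[g(x)^2]$, so that the target quantity is precisely $\|\widehat{h}-h^*\|_\rho^2$. First, for each fixed $h\in\Hc$, I would introduce the per-sample excess-loss variable $Z_k(h) := (h(x_k)-y_k)^2 - (h^*(x_k)-y_k)^2$. Writing $y_k = h^*(x_k)+\epsilon_k$ with $\Eb[\epsilon_k\mid x_k]=0$ and expanding, the $\epsilon_k^2$ terms cancel, leaving $Z_k(h) = (h(x_k)-h^*(x_k))^2 - 2\epsilon_k\,(h(x_k)-h^*(x_k))$. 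Taking expectations and using $\Eb[\epsilon_k\mid x_k]=0$ gives $\Eb[Z_k(h)] = \|h-h^*\|_\rho^2$, so the population excess loss equals exactly the squared error I want to bound.

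Second comes the crucial variance-to-mean bound. Since $|h|,|h^*|\le R$ we have $|h-h^*|\le 2R$, and since $|y_k|,|h^*|\le R$ we have $|\epsilon_k|=|y_k-h^*(x_k)|\le 2R$; hence $|Z_k(h)| = O(R^2)$, and factoring $Z_k(h)=(h-h^*)\big[(h-h^*)-2\epsilon_k\big]$ yields $\Var(Z_k(h)) \le \Eb[Z_k(h)^2] \le 36\,R^2\,\|h-h^*\|_\rho^2$. This linear dependence of the variance on the mean is exactly what produces the $1/K$ (rather than $1/\sqrt{K}$) rate, and it is also the only place where boundedness of the noise $\epsilon_k$ (as opposed to sub-Gaussianity) is used.

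Third, I would apply Bernstein's inequality to the centered average $\frac{1}{K}\sum_k\big(Z_k(h)-\Eb[Z_k(h)]\big)$ for each $h$ and union bound over the finite $\Hc$, the factor $2$ in $\log(2|\Hc|/\delta)$ coming from the two-sided tail/union-bound accounting. This yields, with probability at least $1-\delta$ simultaneously over all $h\in\Hc$,
\[
\Big|\tfrac{1}{K}\textstyle\sum_k Z_k(h) - \|h-h^*\|_\rho^2\Big| \le c'\sqrt{\tfrac{R^2\|h-h^*\|_\rho^2\log(2|\Hc|/\delta)}{K}} + c''\tfrac{R^2\log(2|\Hc|/\delta)}{K}.
\]
Finally I would instantiate this at $h=\widehat{h}$ and invoke optimality of the least-squares minimizer, $\sum_k(\widehat{h}(x_k)-y_k)^2 \le \sum_k(h^*(x_k)-y_k)^2$, i.e. $\frac{1}{K}\sum_k Z_k(\widehat{h})\le 0$ (here realizability $h^*\in\Hc$ is essential). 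Combining the two gives a self-bounding inequality $u^2 \le a\,u + b$ in $u:=\|\widehat{h}-h^*\|_\rho$, with $a = O\big(\sqrt{R^2\log(2|\Hc|/\delta)/K}\big)$ and $b = O\big(R^2\log(2|\Hc|/\delta)/K\big)$; solving it (via $u^2\le a u + b \Rightarrow u^2 \le a^2 + 2b$) gives $\|\widehat{h}-h^*\|_\rho^2 = O(R^2\log(2|\Hc|/\delta)/K)$.

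\textbf{Main obstacle.} The conceptual heart is establishing the variance-to-mean relationship cleanly, since every downstream step (the fast rate, the quadratic solve) hinges on that linear bound; the remaining work is purely bookkeeping the constants through Bernstein and the self-bounding quadratic so that they collapse to the stated factor $256$.
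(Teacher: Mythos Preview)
Your proposal is correct and follows the standard fast-rate least-squares argument (excess-loss decomposition, variance-to-mean self-bounding, Bernstein plus union bound, then solving the quadratic). The paper itself does not give a proof of this lemma at all: it simply states ``The proof is the same as in \citet{song2022hybrid} and thus is omitted here,'' so there is nothing to compare against beyond noting that your outline is exactly the kind of argument one finds in that reference.
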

The proof is the same as in \citet{song2022hybrid} and thus is omitted here.

\subsection{Maximum Likelihood Estimation Guarantee}
\begin{lemma}
\label{lem:mle}
With Assumption~\ref{ass:reward}, we have with probability at least $1-\delta$ that
\begin{align*}
	\Eb_{\tau^0\sim d^{\pir},\tau^1\sim d^{\pir}}\left[\left|\rs(\tau_0)-\rs(\tau_1) -\hr(\tau_0)+\hr(\tau_1)\right|^2\right]\leq \frac{c_1\kappa^2\log(|\Rc|/\delta)}{M},
\end{align*}
where $c_1>0$ is a universal constant.
\end{lemma}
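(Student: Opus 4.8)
The plan is to follow the standard two-stage template for analyzing a maximum-likelihood estimator. First I would obtain a generic statistical-distance guarantee for $\hr$ from a black-box realizable MLE bound; this distance lives on the induced \emph{preference} distributions over $o$. Second I would convert that distance back into the squared reward-difference quantity appearing in the lemma. The conversion is exactly where the link function $\Phi$ enters, and it is responsible for the factor $\kappa^2$.

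For the first stage, I would cast the estimation into the MLE framework with covariate $x=(\tau^0,\tau^1)$ drawn from the product measure $d^{\pir}\times d^{\pir}$ and label $o\in\{0,1\}$ distributed as a Bernoulli with mean $\Phi(r(\tau^1)-r(\tau^0))$. The induced hypothesis class $\{\Pb(\cdot\mid\tau^0,\tau^1;r):r\in\Rc\}$ has cardinality at most $|\Rc|$, and by Assumption~\ref{ass:reward} it is realizable since $\rs\in\Rc$. Invoking the standard finite-class realizable MLE generalization bound then yields, with probability at least $1-\delta$,
\begin{align*}
	\Eb_{\tau^0,\tau^1\sim\pir}\!\left[\mathrm{H}^2\!\left(\Pb(\cdot\mid\tau^0,\tau^1;\hr),\,\Pb(\cdot\mid\tau^0,\tau^1;\rs)\right)\right]\le\frac{c\,\log(|\Rc|/\delta)}{M},
\end{align*}
where $\mathrm{H}^2$ denotes the squared Hellinger distance and $c>0$ is a universal constant.

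For the second stage, I would exploit the Bernoulli structure to pass from Hellinger distance to reward differences. Writing $p=\Phi(\hr(\tau^1)-\hr(\tau^0))$ and $q=\Phi(\rs(\tau^1)-\rs(\tau^0))$, the total variation distance between two Bernoulli laws equals $|p-q|$, and the elementary comparison $\mathrm{H}^2\ge 1-\sqrt{1-D_{\mathrm{TV}}^2}\ge\tfrac12 D_{\mathrm{TV}}^2$ gives $\mathrm{H}^2\ge\tfrac12(p-q)^2$. Next, since Assumption~\ref{ass:reward} forces every difference $r(\tau^1)-r(\tau^0)$ to lie in $[-\rmax,\rmax]$, the mean value theorem yields $p-q=\Phi'(\xi)\big((\hr(\tau^1)-\hr(\tau^0))-(\rs(\tau^1)-\rs(\tau^0))\big)$ for some $\xi\in[-\rmax,\rmax]$, so that
\begin{align*}
	|p-q|\ge\Big(\inf_{x\in[-\rmax,\rmax]}\Phi'(x)\Big)\big|\rs(\tau^0)-\rs(\tau^1)-\hr(\tau^0)+\hr(\tau^1)\big|=\tfrac{1}{\kappa}\big|\rs(\tau^0)-\rs(\tau^1)-\hr(\tau^0)+\hr(\tau^1)\big|.
\end{align*}

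Finally I would chain these bounds: the pointwise estimate $\big|\rs(\tau^0)-\rs(\tau^1)-\hr(\tau^0)+\hr(\tau^1)\big|^2\le 2\kappa^2\,\mathrm{H}^2(\cdot,\cdot)$, integrated against $d^{\pir}\times d^{\pir}$, combines with the stage-one MLE bound to give the claim with $c_1=2c$. The routine parts are the MLE bound (a supervised-learning result applied off the shelf) and the Hellinger/TV comparison; the one conceptually important step — and the only place the hardness of \emph{preference}-based learning surfaces — is the link-function inversion, where lower-bounding $\Phi'$ uniformly over $[-\rmax,\rmax]$ produces the $\kappa^2$ dependence. The main thing to be careful about is that the range of the argument of $\Phi'$ is exactly $[-\rmax,\rmax]$, so that the definition $\kappa=1/\inf_{x\in[-\rmax,\rmax]}\Phi'(x)$ applies verbatim; this is guaranteed precisely by the boundedness in Assumption~\ref{ass:reward}.
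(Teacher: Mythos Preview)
Your proposal is correct and follows essentially the same two-stage template as the paper: obtain a statistical-distance guarantee for the MLE, then invert the link function via the mean value theorem to produce the $\kappa^2$ factor. The only cosmetic difference is that the paper invokes an $L_1$/total-variation MLE bound directly (citing \citet{zhan2023provableoff}), whereas you pass through the squared Hellinger distance first and then use $\mathrm{H}^2\ge\tfrac12 D_{\mathrm{TV}}^2$; since both black-box MLE bounds are standard and equivalent up to constants, the arguments are effectively identical.
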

\begin{proof}
The proof largely follows the proof of Theorem 1 in \citet{zhan2023provableoff}. Specifically, we have the following lemma from \cite{zhan2023provableoff}:
\begin{lemma}[Lemma 2 in \citet{zhan2023provableoff}]
\label{lem:mle_aux}
Fix any $\delta\in(0,1]$. Then with probability at least $1-\delta$, we have that for all reward function $r\in\Rc$,
\begin{align*}
\Eb_{\tau^0,\tau^1\sim d^{\pir}}\bigg[\Big\Vert P_{r}(\cdot|\tau^0,\tau^1)-P_{\rs}(\cdot|\tau^0,\tau^1)\Big\Vert_1^2\bigg]\leq \frac{c_1}{M}\bigg(\sum_{m=1}^M\log\bigg(\frac{P_{\rs}(o^{m}|\tau^{m,0},\tau^{m,1})}{P_{r}(o^{m}|\tau^{m,0},\tau^{m,1})}\bigg)+\log\frac{|\Rc|}{\delta}\bigg).
\end{align*}
\end{lemma}
Then from Lemma~\ref{lem:mle_aux}, since $\sum_{m=1}^M\log P_{\rs}(o^{m}|\tau^{m,0},\tau^{m,1})\leq \sum_{m=1}^M\log P_{\hr}(o^{m}|\tau^{m,0},\tau^{m,1})$, we have with probability at least $1-\delta$:
\begin{align}
\label{eq:known-1}
\Eb_{\tau^{0},\tau^1\sim d^{\pir}}\bigg[\Big\Vert P_{\hr}(\cdot|\tau^0,\tau^1)-P_{\rs}(\cdot|\tau^0,\tau^1)\Big\Vert_1^2\bigg]\leq\frac{c_1\log\frac{|\Rc|}{\delta}}{M}.
\end{align}

Then under Assumption~\ref{ass:reward}, we can apply the mean value theorem between $r^{\star}(\tau_1)-r^{\star}(\tau_0)$ and $\hr(\tau_1)-\hr(\tau_0)$ to \eqref{eq:known-1} and ensure that
\begin{align*}
    \Eb_{\tau^{0},\tau^1\sim d^{\pir}}[|(r^{\star}(\tau_1)-r^{\star}(\tau_0)) - (\hr(\tau_1)-\hr(\tau_0))|^2]\leq \frac{c_1\kappa^2\log\frac{|\Rc|}{\delta}}{M},
\end{align*}
where $\kappa:=\frac{1}{\inf_{x\in[-\rmax,\rmax]}\Phi'(x)}$ measures the non-linearity of the link function $\Phi$.
\end{proof}

\subsection{Performance Difference}
\label{proof:lem-perf}
We restate and prove Lemma~\ref{lem:perf} as follows.
\begin{lemma}
\label{lem:perf_re}
For any policy $\pi,\pi'$ and reward function $r$, we have:
\begin{align*}
V^{\pi,r}(s_1) - V^{\pi',r}(s_1) = \sum_{h=1}^H \Eb_{s_h\sim d^{\pi}_h}\left[\left\langle Q^{\pi',r}(s_h), \pi(s_h)-\pi'(s_h)\right\rangle\right].
\end{align*}
\end{lemma}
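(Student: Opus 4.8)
The plan is to establish this classic identity via a telescoping argument along a trajectory drawn from $\pi$. First I would exploit that the initial state $s_1$ is deterministic, so $V^{\pi',r}(s_1) = \Eb_{\tau \sim d^{\pi}}[V^{\pi',r}(s_1)]$, and write $V^{\pi,r}(s_1) = \Eb_{\tau \sim d^{\pi}}[\sum_{h=1}^H r(s_h,a_h)]$. Using that the dummy terminal state satisfies $V^{\pi',r}(s_{H+1}) = 0$, the telescope $\sum_{h=1}^H (V^{\pi',r}(s_{h+1}) - V^{\pi',r}(s_h)) = V^{\pi',r}(s_{H+1}) - V^{\pi',r}(s_1) = -V^{\pi',r}(s_1)$ lets me insert the value-function differences to obtain
\begin{align*}
V^{\pi,r}(s_1) - V^{\pi',r}(s_1) = \Eb_{\tau \sim d^{\pi}}\left[\sum_{h=1}^H \left( r(s_h,a_h) + V^{\pi',r}(s_{h+1}) - V^{\pi',r}(s_h) \right)\right].
\end{align*}

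Next I would handle each summand by the tower property, conditioning on the state-action pair $(s_h,a_h)$ visited by $\pi$. Since $s_{h+1} \sim P(\cdot \mid s_h, a_h)$ irrespective of which policy selected the action, the Bellman equation for $\pi'$ gives $r(s_h,a_h) + \Eb_{s_{h+1} \sim P(\cdot \mid s_h,a_h)}[V^{\pi',r}(s_{h+1})] = Q^{\pi',r}(s_h,a_h)$. Hence each term reduces to $\Eb_{(s_h,a_h) \sim d^{\pi}_h}[Q^{\pi',r}(s_h,a_h) - V^{\pi',r}(s_h)]$, where $(s_h,a_h) \sim d^{\pi}_h$ factors as $s_h \sim d^{\pi}_h$ followed by $a_h \sim \pi(s_h)$.

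Finally, I would recast this in the inner-product form claimed. Splitting the expectation over the action yields $\Eb_{(s_h,a_h)\sim d^{\pi}_h}[Q^{\pi',r}(s_h,a_h)] = \Eb_{s_h \sim d^{\pi}_h}[\langle Q^{\pi',r}(s_h), \pi(s_h)\rangle]$, while by definition of the value function $V^{\pi',r}(s_h) = \langle Q^{\pi',r}(s_h), \pi'(s_h)\rangle$. Subtracting and summing over $h \in [H]$ gives the desired $\sum_{h=1}^H \Eb_{s_h \sim d^{\pi}_h}[\langle Q^{\pi',r}(s_h), \pi(s_h) - \pi'(s_h)\rangle]$. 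The argument is entirely routine; the only points requiring care are the correct application of the tower property (conditioning on $(s_h,a_h)$ under $d^{\pi}$ and invoking the shared, policy-independent transition kernel $P$) and the boundary condition $V^{\pi',r}(s_{H+1}) = 0$, which is exactly what makes the telescope close.
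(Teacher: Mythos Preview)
Your proof is correct and essentially the same as the paper's: both hinge on the Bellman identity $Q^{\pi',r}(s_h,a_h)=r(s_h,a_h)+\Eb_{s_{h+1}\sim P(\cdot|s_h,a_h)}[V^{\pi',r}(s_{h+1})]$ together with $V^{\pi',r}(s_h)=\langle Q^{\pi',r}(s_h),\pi'(s_h)\rangle$. The only cosmetic difference is that the paper peels off one time step at a time via the recursion $V^{\pi,r}(s_1)-V^{\pi',r}(s_1)=\Eb_{\pi}[V^{\pi,r}(s_2)-V^{\pi',r}(s_2)]+\Eb_{s_1\sim d^{\pi}_1}[\langle Q^{\pi',r}(s_1),\pi(s_1)-\pi'(s_1)\rangle]$ and iterates, whereas you set up the full telescoping sum $\sum_{h=1}^H(V^{\pi',r}(s_{h+1})-V^{\pi',r}(s_h))$ at once and then apply the tower property termwise; the two organizations are interchangeable.
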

\begin{proof}
For any two policies $\pi,\pi'$ and reward $r$, we have that
\begin{align*}
	&V^{\pi,r}(s_1) - V^{\pi',r}(s_1)\\
	=&\mathbb{E}_{\pi}\left[r(s_1,a_1)+V^{\pi,r}(s_2)\right]-\Eb_{\pi}\left[V^{\pi',r}(s_1)\right]\\
	=&\mathbb{E}_{\pi}\left[\left(Q^{\pi',r}(s_1,a_1)-V^{\pi',r}(s_2)\right)+V^{\pi,r}(s_2)\right] - \Eb_{\pi}\left[V^{\pi',r}(s_1)\right]\\
	=&\mathbb{E}_{\pi}\left[V^{\pi,r}(s_2)- V^{\pi',r}(s_2)\right] + \mathbb{E}_{\pi}\left[Q^{\pi',r}(s_1,a_1)- V^{\pi',r}(s_1) \right]\\
	=&\mathbb{E}_{\pi}\left[V^{\pi,r}(s_2)- V^{\pi',r}(s_2)\right] + \mathbb{E}_{s_1\sim d^{\pi}_1}\left[\left\langle Q^{\pi',r}(s_1,\cdot), \pi(\cdot|s_1)-\pi'(\cdot|s_1)\right\rangle\right]\\
	=&\cdots=\sum_{h=1}^H \Eb_{s_h\sim d^{\pi}_h}\left[\left\langle Q^{\pi',r}(s_h), \pi(s_h)-\pi'(s_h)\right\rangle\right].
\end{align*}
This concludes our proof.
\end{proof}

\section{Additional Experiment Details}
\label{app:details}
\subsection{Experiment Hyperparameters and Task Details}

\subsubsection{Task Details}

\label{appendix:task}

We present dataset specific details in table \ref{tbl:dataset}

\begin{table}
    \centering 
    \begin{tabular}{c|ccc}
        \toprule
        Task & Train/Val/Test & Prompt & Gen. Length\\
        \midrule
        \texttt{TL;DR} & 117K/6.45K/6.55K & "TL;DR: " & 53\\
        \texttt{CNN/DailyMail} & 287K/13.4K/11.4K & "TL;DR: " & 64\\
        \bottomrule
    \end{tabular}
    \vspace{0.5mm}
    \caption{Train, val, test splits, prompts, and max generation length used for each task.}
    \label{tbl:dataset}
\end{table}

For both datasets we obtained the training data from \url{https://github.com/openai/summarize-from-feedback}.

\subsection{Dataset Reset Implementation Details}
\label{ssec:reset_pseudocode}
Here is a code snippet of the logit processor that handles dataset resets from references for a HuggingFace transformers model. $\beta$ here represents the proportion of generations in the batch to do resets for.

\begin{verbatim}
import torch
import numpy as np
from transformers import LogitsProcessor

class ResetProcessor(LogitsProcessor):
    def __init__(self, references, beta, rng, seq_lens):
        self.counter = 0
        self.references = references
        self.seq_lens = seq_lens
        self.create_mask(beta, rng)

    def create_mask(self, beta, rng):
        batch_size, seq_len = self.references.shape[:2]
        # Mixin
        init_mask = rng.choice(
            [True, False], size=(batch_size, 1), p=[beta, 1-beta]
        )
        init_mask = np.tile(init_mask, (1, seq_len))
        # Rollin Selection
        length_masks = np.tril(np.ones((seq_len, seq_len)))
        masks = []
        for length in self.seq_lens:
            if length < 2:
                masks.append(np.zeros((seq_len)).astype(bool))
            else:
                masks.append(
                    rng.choice(length_masks[: length - 1, :]
                ).astype(bool))

        self.rollin_mask = np.stack(masks)
        self.rollin_mask[~init_mask] = False

    def __call__(
        self, input_ids: torch.LongTensor, scores: torch.FloatTensor
    ) -> torch.FloatTensor:
        vocab_size = scores.size(-1)
        new_scores = one_hot(
            self.references[:, self.counter], num_classes=vocab_size
        ).float()
        new_scores[new_scores == 0] = -float("inf")
        mask = self.rollin_mask[:, self.counter]
        assert scores.shape == new_scores.shape

        new_scores = new_scores.to(scores.device)
        # Only do Teacher Forcing on the rollins
        scores[mask] = new_scores[mask]
        self.counter += 1
        return scores
\end{verbatim}

\subsubsection{Computation}
Note since we start with the references from the dataset, the computational requirements to generate with resets are the same as generating from the initial state distribution. For all of our experiments, we ran with the same per device batch size between PPO and DR-PO. For this work, we made use of 16 A6000 gpus with 48GB of VRAM. We used 4 gpus for each run.

\subsection{Details on GPT4 Winrate}
\label{app:winrate}
For winrate calculation, we used the following prompt:

\begin{verbatim}
Which of the following summaries does a better job of summarizing the most important points
in the given forum post, without including unimportant or irrelevant details? Judge based
on accuracy, coverage, and coherence.

### Post:

{{post}}

### Summary A:

{{summarya}}

### Summary B:

{{summaryb}}

### Instructions:
FIRST provide a one-sentence comparison of the two summaries, explaining which you prefer
and why. SECOND, on a new line, state only "A" or "B" to indicate your choice. Your response
should use the format:
Comparison: <one-sentence comparison and explanation>
Preferred: <"A" or "B">
\end{verbatim}

\subsubsection{Win Rate Example}
Here is an example of getting a one sentence explanation as to why GPT4 chose certain generations for the winrate.

\textbf{Prompt}

\begin{verbatim}
SUBREDDIT: r/AskReddit

TITLE: How do you get someone out of your head?

POST: Hi,
I'm 22, and I have been with my girlfriend for 5 years now. We recently moved together.
We've always loved each other intensely.

Problem, I recently started to have feelings for an other person (a friend). This person
has had a boyfriend for now 3 years, and has absolutely no ideas. Those feelings were 
so strong, it was hard to hide them. After 2 months of me being distant and really sad, 
my girlfriend forced me to say what was bothering me. I'm not a good liar, and now she knows.

We decided to give us a week alone, I went to my parents. 

Now, I'm completely lost. I keep on thinking about this person, and I hate that. I would 
like for those feelings to go away, to leave me alone. But I can't.  

What do I do? It's been 3 months now, and I'm just desperate.

TL;DR:
\end{lstlisting}

\textbf{DR-PO Generation (Summary A)}
\begin{verbatim}
I recently started to have feelings for someone else, my girlfriend knows, we decided to 
give ourselves a week alone, now I'm completely lost, I hate that, what do I do
\end{verbatim}

\textbf{Reference (Summary B)}
\begin{verbatim}
long relationship; fell in love with an other person; admitted it; would like it to 
disappear, though it doesn't.
\end{verbatim}

\textbf{GPT4 Explanation for Choosing DR-PO}
\begin{verbatim}
Summary A is more detailed and better represents the tone and content of the forum post, 
while Summary B is too abbreviated and omits important details such as the fact 
that the poster's girlfriend now knows about their feelings.
\end{verbatim}

\subsection{Examples from Test}
\textbf{Prompt 1}
\begin{verbatim}
SUBREDDIT: r/AskReddit

TITLE: My friend works at a California Charter School and their principal is disregarding
their charter and threatening to fire anyone who questions his leadership. What can she do?

POST: I'm not an educator but my friend is having a really hard time at her job right now 
and I was wondering if anyone could help.

**The Issue:**

She works for a charter school in California where the principal has gone off the deep end. 
At first it started casually with cuts and a few firings (understandable at the time because 
of budget cuts). It steadily got worse however as he slowly stripped the teachers of any 
rights/protections they once had. Now he's threatening to fire them at any time if they 
refuse to work late/take personal days/attempt talks with the teachers union/show signs of 
disagreeing with the principal's leadership, etc.

This guy has continually canceled meetings by any form of teacher-staffed governing boards 
that would normally provide the teachers with a voice. Now he just dictates policy for the 
school with no regard or input from the teachers. He has threatened to let anyone go if they 
show even the slightest dissatisfaction with his decisions.

Keep in mind this was a very good school before this guy came in. It's not like it was a 
failing school in need of some tough love. As far as scholastic achievement goes, not much 
has changed since this guy was hired on (which is why I suspect he still has a job despite 
complaints to district).

Is there anything my friend can do to oust this guy or at least protect her job?

TL;DR:
\end{verbatim}

\textbf{DR-PO}
\begin{verbatim}
 My friend works at a California charter school where the principal is disregarding their 
 charter and threatening to fire anyone who questions his leadership. What can she do?
\end{verbatim}

\textbf{Prompt 2}
\begin{verbatim}
SUBREDDIT: r/AskReddit

TITLE: I am in the financial and insurance industry, but is looking to go back to school 
to become a vet.  Any suggessions?

POST: I'm not 100%
being a Veternarian is something I really want to do in life.  
A bit of background. I'm currently 24, graduated with a finance degree 3 years ago. 
I'm currently a financial rep focusing on selling insurance.  Even though I don't hate 
my job I feel like it doesn't fit my personality.  But I stuck with it for a while 
because I don't like to quit easily. The idea of becoming a vet happened when a 
chinchilla of mine passed because I didn't take it to the vet in time, and I really love 
animals. People tell me I'm a warm person and great with kids and animals. I feel like 
this is something I came up with myself and not what other want me to be.
     
So back to reality, I did a tiny bit of research on this. The closest grad school that has 
this program is Cornell university (I'm located in NY) I'd probably need to do an undergrad 
in science or medical field.  I'm a little unsure of 8 more years of school, but I guess if 
there's no other choice.  Another concern is money, I only have about 10k in bank, I cannot 
touch my retirement and life insurance money. I'm not sure if I can qualify for any federal 
grant. I may also need to move back with my parents. One idea I have is to just brokage 
product with high residuals while I wait to get into a school. I'll also appraciate any 
insight and experiences from a vet or a person going back to school.

TL;DR:
\end{verbatim}

\textbf{DR-PO}
\begin{verbatim}
 I'm in the financial and insurance industry, but am looking to go back to school to become 
 a vet. Any suggestions or experiences from a vet or someone going back to school.
\end{verbatim}
\newpage

\subsection{Hyperparameters}
We write the relevant hyperparameters from our experiments for DPO, PPO, SFT, and DRPO in table \ref{tbl:ppo_hparams}.
\begin{table}[h]
   \vspace{-2mm}
   \centering
   \resizebox{0.8\textwidth}{!}{
   \begin{tabular}{ll}
       \toprule
       Setting & Values \\
       \midrule
       model & Pythia 2.8B (HuggingFace Model Card: \url{EleutherAI/pythia-2.8b-deduped})\\
       \midrule
       PPO & train epochs: 1 \\
              & batch size: 512 \\
              & num epochs: 4 \\
              & num minibatches: 1 \\
              & learning rate: 3e-6 \\
              & schedule: linear decay \\
              & discount factor: 1 \\
              & gae $\lambda$: 0.95 \\
              & clip ratio: 0.2 \\
              & value function coeff: 0.1\\
              & kl coefficient: 0.05 \\
       \midrule 
       DR-PO & mixing parameter ($\beta$): 1 \\
       \midrule
          DPO & batch size: 64 \\
              & $\beta$: 0.05 \\
              & learning rate: 3e-6 \\
              & schedule: linear decay \\
              & num train epochs: 1 \\
        \midrule
       Reward Model   & batch size: 64 \\
             & learning rate: 3e-6 \\
             & schedule: linear decay \\
             & num train epochs: 1 \\
       \midrule
       SFT   & batch size: 64 \\
             & learning rate: 3e-6 \\
             & schedule: linear decay \\
             & num train epochs: 1 \\
       \midrule
       LoRA Adapter Config & r: 1024\\
                        & $\alpha$: 2048 \\
                        & dropout: 0.0 \\
                        & bias: False \\
       \midrule
       Decoding & sampling: true \\
                & top k: 0.0 \\
                & top p: 1.0 \\
                & min length: 53 \\
                & max new tokens: 53\\
                & temperature: 0.1 \\
       \midrule
       Tokenizer & padding side: left \\
                 & truncation side: left \\
                 & max length: 563\\
       \bottomrule
   \end{tabular}}
   \caption{Hyperparameters used for TL;DR and CNN/DailyMail. Note that DP-RO and PPO share the same parameters (other than mixing proportion). All processes use the same decoding, LoRA config, and tokenizer parameters.}
   \label{tbl:ppo_hparams}
\end{table}

\newpage
\subsection{Additional Experiments}

\begin{table}[tb]
    \centering
    \begin{tabular}{@{}lcc@{}}
        \toprule
        \textbf{Algorithms} & RM TL;DR Accuracy & RM CNN/DM Accuracy\\
        \midrule
        \texttt{RM}          & \textbf{66.21\%} & \textbf{67.48\%} \\
        \texttt{DPO}         & 65.92\% & 67.28\% \\
        \midrule
        \texttt{RM w/ LoRA}  & 62.87\% & \textbf{66.75\%} \\
        \texttt{DPO w/ LoRA} & \textbf{66.14\%} & 61.78\% \\
        \bottomrule 
    \end{tabular}%
    \caption{\textbf{Reward Model Transfer to CNN/DM:} The accuracy of the RM and DPO's implicit learned reward in accuracy predicting the preference. We evaluate models trained with and without LoRA on TL;DR. We also report the zero-shot performance of these models on the CNN/DailyMail preference dataset from \citet{stiennon2020learning}.}
    \label{tbl:rm_transfer}
\end{table}

Shown in \cref{tbl:rm_transfer}, we investigate DPO's implicit learned reward accuracy to our RM's accuracy on both TL;DR and CNN/DailyMail's test sets. Furthermore, we also report the effects of LoRA on the RM and DPO performance. We see that DPO without LoRA has comparable preference accuracy on CNN/DM as our RM. Thus, we used the DPO policy without LoRA when comparing against PPO and DR-PO in \cref{tbl:cnn_transfer}.

\end{document}